\documentclass[a4paper,	11pt]{article}

\usepackage{bigints}

\usepackage[english]{babel} 
\usepackage[T1]{fontenc} 
\usepackage[babel=true]{csquotes} % csquotes va utiliser la langue d�finie dans babel   
\usepackage{aeguill} 

\usepackage{latexsym}
\usepackage{comment}
\usepackage{natbib}

\usepackage{tikz}
\usepackage{amsmath,amssymb}
\usetikzlibrary{shapes, arrows.meta, positioning, fit}

\def \trans{^{\scriptscriptstyle{\intercal}}}

\usepackage{appendix}
\usepackage[normalem]{ulem} 
\usepackage{upgreek}
\usepackage{tabularx}
\usepackage{amssymb}
\usepackage{amsmath}
\usepackage{amsmath,mathrsfs}
\usepackage{amsthm} 
\usepackage{graphicx}
\usepackage{empheq}
\usepackage{color}
\usepackage{empheq}
\usepackage{float}
\usepackage{caption}
\usepackage{subcaption}
\usepackage{xcolor}
\usepackage{adjustbox}
\usepackage{listings}
\usepackage{algorithm}
\usepackage[algo2e,ruled,vlined]{algorithm2e} 
\usepackage{algorithmic}
\usepackage{bbm}
\usepackage{enumitem}
\usepackage{booktabs}
\usepackage{mathrsfs}
\definecolor{dkgreen}{rgb}{0,0.6,0}
\definecolor{gray}{rgb}{0.5,0.5,0.5}
\definecolor{mauve}{rgb}{0.58,0,0.82}

\def \d{\mathrm{d}} 
\def\eps{\varepsilon} 

\def \E{\mathbb{E}}

\def \P{\mathbb{P}}
\def \Q{\mathbb{Q}}

\def \R{\mathbb{R}}

\def \W{\mathbb{W}}
\newcommand{\bef}[1]{\textbf{#1}}

\usepackage{amsmath, amssymb}
\usepackage{graphicx}
\usepackage{color}
\usepackage{textcomp}
\usepackage{graphics}
\usepackage{float}
\usepackage{empheq}
\usepackage{caption}
\usepackage{subcaption}
\usepackage{hyperref}
\usepackage{mathrsfs}

\usepackage{comment}

\usepackage{geometry}
\usepackage{stmaryrd}

\AtBeginDocument{}
\AtBeginDocument{}

\newtheorem{theorem}{Theorem}[section]

\newtheorem{assum}[theorem]{Assumption}

\mathtoolsset{showonlyrefs}

\newcommand{\be}{\begin{align}}
	\newcommand{\ee}{\end{align}}

\newcommand{\bea}{\begin{eqnarray}}
	\newcommand{\bes}{\begin{subEquations}}
		\newcommand{\ees}{\end{subEquations}}
	\newcommand{\bgt}{\begin{gather}}
		\newcommand{\egt}{\begin{gather}}
			\newcommand{\eea}{\end{eqnarray}}
		\newcommand{\beaa}{\begin{eqnarray*}}
			\newcommand{\eeaa}{\end{eqnarray*}}

		\def \eps{\varepsilon}

		\def \Fc{{\cal F}}

		\def \Lc{{\cal L}}
		\def \Pc{{\cal P}}

		\def \Nc{{\cal N}}

		\def \msY{\mathscr{Y}}

		\def \d{\mathrm{d}} 
		
		\def \trans{^{\scriptscriptstyle{\intercal}}}

		\def\beqs{\begin{eqnarray*}}
			\def\enqs{\end{eqnarray*}}
		\def\beq{\begin{eqnarray}}
			\def\enq{\end{eqnarray}}

		\numberwithin{equation}{section}

\begin{document}
			
			\title{SBBTS: A Unified Schr\"odinger–Bass Framework for Synthetic Financial Time Series}
			
			\author{
				Alexandre ALOUADI \footnote{BNPP and Ecole Polytechnique. This author is supported by a CIFRE industial collaboration between BNP-PAR and Ecole Polytechnique.  \sf alexandre.alouadi@bnpparibas.com}
				\and Grégoire LOEPER \footnote{BNPP  and Monash University \sf gregoire.loeper@bnpparibas.com} 
				\and Célian MARSALA \footnote{BNPP and ENSAE Paris \sf celian.marsala@ensae.fr}
				\and Othmane MAZHAR \footnote{LPSM, Universit\'e Paris Cit\'e and Sorbonne University. This author was supported by the Chair ``Futures of Quantitative Finance".  \sf othmane.xx90@gmail.com}
				\and Huy\^en PHAM  \footnote{Ecole Polytechnique, CMAP. This author is supported by the Chair ``Financial Risks'', by FiME (Laboratory of Finance and Energy Markets), and the EDF–CACIB Chair ``Finance and Sustainable Development''.   \sf huyen.pham@polytechnique.edu}
			}
			
			\date{}
			\maketitle
			
\begin{abstract}
We study the problem of generating synthetic time series that reproduce both marginal distributions and temporal dynamics, a central challenge in financial machine learning. Existing approaches typically fail to jointly model drift and stochastic volatility, as diffusion-based methods fix the volatility while martingale transport models ignore drift. 
We introduce the Schrödinger–Bass Bridge for Time Series (SBBTS), a unified framework that extends the Schrödinger–Bass formulation to multi-step time series. The method constructs a diffusion process that jointly calibrates drift and volatility and admits a tractable decomposition into conditional transport problems, enabling efficient learning.
%We address the problem of generating a continuous semi-martingale with a prescribed 
%joint distribution at successive times $0$ $=$ $t_0$ $<$ $\ldots$ $<$ $t_N$ $=$ $T$. 
%This is formulated as an optimal interpolation problem that unifies both the 
%Schr\"odinger Bridge and Bass frameworks, allowing the construction of a diffusion 
%process that simultaneously calibrates both drift and volatility to time series data. 
%By decomposing the problem into a sequence of optimal transport problems between 
%successive time steps, where mass is transported from a Dirac measure to a given conditional %density, we take advantage of the Schr\"odinger Bridge and Bass (SBB) optimal transport framework %and design an efficient algorithm to solve the SBB problem for time series data. 
Numerical experiments on the Heston model demonstrate that SBBTS accurately recovers stochastic volatility and correlation parameters that prior Schr\"odinger Bridge methods fail to capture. Applied to S\&P 500 data, SBBTS-generated synthetic time series consistently improve downstream forecasting  performance when used for data augmentation, yielding higher classification accuracy and Sharpe ratio compared to real-data-only training. 
These results show that SBBTS provides a practical and effective framework for realistic time series generation and data augmentation in financial applications. 
The code is available at \textcolor{blue}{\href{https://github.com/alexouadi/SBBTS}
{https://github.com/alexouadi/SBBTS}}.
\end{abstract}

			\vspace{5mm}

			\noindent {\bf Keywords}: Machine Learning, Generative AI, Financial Time Series, 
            Schr\"odinger Bridge Bass, Optimal Transport

%\newpage			

\section{Introduction}

The generation of realistic synthetic time series is a central problem in modern machine learning, with applications ranging from finance and healthcare to climate modelling. 
In financial markets, synthetic data are widely used for stress testing, risk management, and training predictive models, especially in settings where data are scarce, costly, or sensitive. However, generating time series that faithfully reproduce both marginal distributions and temporal dynamics remains challenging due to complex dependencies, low signal-to-noise ratios, and the presence of higher-order effects such as stochastic volatility and cross-asset correlations.
%Traditional time series models often demand intensive calibration and are vulnerable to model risk %and estimation errors. Consequently, there is growing interest in developing robust methods for 
%synthetic time series generation. Although generative models have achieved remarkable 
%success in image generation and text generation, time series data—such as financial 
%prices—present unique challenges due to their complex temporal dependencies and low 
%signal-to-noise ratios. In the financial domain, synthetic data are indispensable for 
%simulating market scenarios in stress testing, risk measurement, and deep hedging. As 
%a result, data-driven approaches to synthetic time series generation have attracted 
%considerable attention.

Recent progress in generative modelling, particularly diffusion-based methods, has led to significant advances in high-dimensional data generation. Schrödinger Bridge (SB) methods 
\cite{debortoli2021diffusion} provides a principled framework for constructing stochastic processes that match prescribed marginal distributions by learning a drift that is closest, in a relative entropy sense, to a reference Brownian motion. These approaches have been extended to the interpolation of joint distribution  in \cite{hamdouche2023generative} and have shown promising results for time series generation. However, a key limitation of SB methods is that the volatility structure is fixed by construction, which prevents them from capturing important features of financial data such as stochastic volatility and correlated noise.

An alternative perspective is provided by martingale transport methods, in particular the Bass framework, which focuses on calibrating the volatility to match marginal distributions while constraining the drift, see   \cite{BackhoffVeraguas2019, ConzeHenryLabordere2019},  
\cite{acciaio2023localvolatility, joseph2024measurepreserving}.  
While effective for certain calibration problems, this approach ignores drift dynamics and therefore fails to capture temporal dependencies and predictive structure. As a result, neither framework alone is sufficient to model realistic time series where both drift and volatility play a fundamental role.
%Two prominent frameworks address the calibration of a diffusion process to prescribed 
%marginal distributions. The Schrödinger bridge (SB) framework 
%\cite{schrodinger1931, Follmer1988} finds the drift of a diffusion minimizing the 
%Kullback-Leibler divergence from a reference Brownian motion under marginal 
%constraints, and has been leveraged for synthetic time series generation 
%\cite{debortoli2021diffusion, hamdouche2023generative}. The Bass local volatility 
%model \cite{BackhoffVeraguas2019, ConzeHenryLabordere2019} instead calibrates the 
%volatility to match a finite set of marginals via a Brownian martingale, converging 
%to the classical Dupire model \cite{Dupire1994} as the time grid is refined 
%\cite{acciaio2023localvolatility, joseph2024measurepreserving}. 
%However, the SB approach fixes the volatility by construction, while the Bass approach constrains 
%the drift to zero: neither framework jointly calibrates both components — a critical 
%limitation for financial time series where drift and stochastic volatility carry 
%equally essential information. 

The Schrödinger–Bridge-Bass (SBB) framework was recently introduced in \cite{henrylabordere2026bridgingschrodingerbasssemimartingale} to bridge this gap by jointly optimizing over drift and volatility through a unified optimal transport formulation. By interpolating between the SB and Bass regimes via a tunable parameter, SBB provides a flexible mechanism to capture both components of the dynamics. However, existing results are restricted to the two-marginal setting and do not directly extend to full time series distributions.
%The Schrödinger–Bass (SBB) problem  \cite{henrylabordere2026bridgingschrodingerbasssemimartingale} %unifies both paradigms by jointly optimizing over drift  and volatility via a tunable parameter %$\beta > 0$ that interpolates between the  two regimes, and %\cite{alouadi2026lightsbbmbridgingschrodingerbass} provides an efficient algorithm to solve 
%it in the two-marginal case. The present work extends this framework to the full 
%time series setting.

%\paragraph{Contributions.} 
In this paper, we introduce a new framework for synthetic time series generation that combines optimal transport with modern machine learning techniques. Our approach is designed to reproduce both marginal distributions and temporal dynamics—two key ingredients for realistic time series modelling—by constructing a continuous-time process that interpolates the joint distribution across successive time steps. We extend the Schrödinger-Bass Bridge problem from the two-marginal setting to full time series distributions, enabling the joint calibration of drift and volatility. We further show that the resulting problem, called the Schrödinger–Bass Bridge for Time Series (SBBTS), 
admits a decomposition into a sequence of conditional optimal transport problems, making it computationally tractable. Building on this structure, we design a scalable neural implementation that captures path-dependent dynamics. Finally, we demonstrate empirically that the proposed method accurately recovers stochastic volatility and correlation structures and improves downstream forecasting performance when used for data augmentation on real financial data. 

The remainder of the paper is organised as follows. In Section 2, we review the Schrödinger Bridge and Bass frameworks and introduce the Schrödinger–Bass (SBB) problem. Section 3 formulates the SBBTS problem for time series and presents a key decomposition result that reduces it to a sequence of conditional transport problems. Section 4 describes the proposed neural algorithm and training procedure. Section 5 provides empirical evaluations on both synthetic benchmarks and real financial data, including data augmentation experiments. Finally, Section 6 concludes and discusses limitations and future research directions.

\paragraph{Notations.} 
\begin{itemize}
\item  A  random variable $X$ distributed according to a probability measure $\nu$  is denoted $X$ $\sim$ $\nu$, and $\E_\nu$ is the expectation operator under $\nu$, i.e., $\E_\nu[\varphi(X)]$ $=$ $\int \varphi \d\nu$.  
%We denote by $L^1(\nu)$ the set of integrable functions w.r.t. the measure $\nu$ on $\R^d$, and 
For a measurable function $\varphi$ on $\R^d$, $\varphi\#\nu$  is the pushforward measure of $\nu$.  
When $X,Y$ are random variables on a probability space $(\Omega,\Fc,\P)$, we also denote $\P\circ X^{-1}$ $=$ $X\#\P$ the law of $X$ under $\P$. 
We denote by $\mu*\nu$ the convolution of two probability measures $\mu$, $\nu$, i.e., the law of $X+Y$ when $X$ $\sim$ $\mu$, $Y$ $\sim$ $\nu$ are independent. 
For a measurable function $\phi$ on $\R^d$, and a probability measure $\mu$ on $\R^d$, we denote by $\mu*\phi$ the function defined on $\R^d$ by $\mu*\phi(x)$ $=$ $\int \phi(x+y)\mu(\d y)$. 
\item  $\Nc_t$ is the normal distribution of mean $0$ and covariance matrix $t I_d$, $t$ $>$ $0$, where $I_d$ is the identity matrix in $\R^{d\times d}$.
\end{itemize}

\section{Background: Schr\"odinger Bridge Bass Problem} \label{sec:SBB_recall}

The Schr\"odinger-Bridge-Bass (SBB) problem, introduced and studied in \cite{henrylabordere2026bridgingschrodingerbasssemimartingale}, is an extension of the classical Schr\"odinger Bridge (SB) problem by jointly optimizing over both drift and volatility of diffusion process. Denote by $\Pc$ the set of probability measure $\P$ on the canonical space $\Omega$ $=$ $C([0,T],\R^d)$ under which the canonical process $X$ has the diffusion decomposition 
\begin{align} \label{decdiffusion} 
X_t &= \;  X_0 + \int_0^t \alpha_s \d s + \int_0^t \sigma_s \d W_s, \qquad  t \in [0,T], \;\;  \P-\mbox{a.s.} 
\end{align} 
with $W$ a $d$-dimensional Brownian motion under $\P$. Now, 
given two probability distributions $\mu_0, \mu_T$ on $\R^d$ with second-order moments,  the goal is to find  $\P$ $\in$ $\Pc$, which minimizes the quadratic cost
\begin{align} \label{defJ} 
J(\P) &= \;  \E_\P \Big[ \int_0^T \| \alpha_t \|^2 + \beta \| \sigma_t - I_d \|^2\, \d t \Big],  
\end{align}
under the marginal constraints $\P\circ X_0^{-1}$ $=$ $\mu_0$ and $\P\circ X_T^{-1}$ $=$ $\mu_T$. We denote by $\Pc(\mu_0,\mu_T)$ the set of such probability measures $\P$ on $\Omega$, and the optimal value of such problem by  
\beq \label{defSBB}
{\rm SBB}(\mu_0,\mu_T) &:=& \inf_{\P \in \Pc(\mu_0,\mu_T)} J(\P).
\enq            
Formally, when $\beta$ goes to infinity,  we constrain the volatility coefficient $\sigma$ to be equal to $I_d$, and we then search for the drifted Brownian motion  $\d X_t$ $=$ $\alpha_t \d t + \d W_t$, 
that is closest to the Brownian motion with respect to the relative entropy (Kullback-Leibler) distance,  under the marginal distribution constraints $X_0$ $\sim$ $\mu_0$, $X_T$ $\sim$ $\mu_T$.  This is the classical Schr\"odinger bridge problem. 
At the other extreme, by dividing the criterion $J$ by $\beta$, and sending $\beta$  to zero, we formally constraint the drift coefficient to be zero, and then we are looking for a Brownian martingale which is closest to the Brownian motion according to the quadratic norm, under the  marginal distribution constraints. This is the Bass martingale transport problem  studied in  \cite{ConzeHenryLabordere2019, acciaio2023localvolatility,  bacetal23b}, motivated by calibration problems. In other words, the parameter $\beta$ controls the relative weight of drift versus volatility, interpolating between these two regimes. 

\vspace{1mm}

The solution of the SBB problem is expressed in terms of a triple $(h,\nu,\msY)$ 
of density/measure/transport map  satisfying a backward/forward/transport structure: 
\begin{equation}
\begin{cases}
h_t &=\; h_T * \Nc_{T-t} \\
\nu_t &= \; \nu_0 * \Nc_t \\
\msY_t & = (\nabla_y \Phi_t)^{-1}, 
\quad \Phi_t(y) = \frac{|y|^2}{2} + \frac{1}{\beta} \log h_t(y)
\end{cases}
\end{equation}
and the endpoints conditions, called  SBB system: 
\beqs
\begin{cases}
\msY_T \# \mu_T &= \;  h_T  \nu_T,   \\
\msY_0 \# \mu_0 & = \;  h_0  \nu_0. 
\end{cases}
\enqs
Existence of such a triple $(h,\nu,\msY)$ satisfying the SBB system is shown in \cite{henrylabordere2026bridgingschrodingerbasssemimartingale} under the condition that 
$\beta T$ $>$ $1$. In this case, and under the finite relative entropy assumption 
\beqs
{\rm KL}(\mu_T|\mu_0*\Nc_T) &:= \; \E_{\mu_T}\big[ \log \frac{\d\mu_T}{\d (\mu_0*\Nc_T)}] \; < \; \infty, 
\enqs
there exists a solution $\P^{SBB}$ to the SBB problem \eqref{defSBB}, with 
an  optimal drift and vola\-tility  given by 
			\beqs
			\begin{cases}
				\alpha_t^* &= \;    \nabla_y \log h_t(\msY_t(X_t)), \\  
				\sigma_t^* &= \;  D_y^2 \Phi_t(\msY_t(X_t)), 
			\end{cases}
			\quad t \in [0,T]. 
			\enqs 
%and we denote by $\P^{SBB}$ $\in$ $\Pc(\mu_0,\mu_T)$ the law %of the optimal diffusion process $X$ on the path space %$\Omega$. 			
Moreover, if we define the process 
			\beq  
			\label{eq:msY_def}
			Y_t = \msY_t(X_t) = X_t - \frac{1}{\beta}  \nabla_y \log h_t(\msY_t(X_t)), \quad  t \in [0,T],
			\enq
			and the change of measure $\frac{\d\hat\Q}{\d\P^{\rm SBB}} \Big|_{\Fc_t} = \frac{1}{h_t(Y_t)}, \; t \in [0,T]$,  
            %(which is indeed a $\P^{\rm SBB}$-martingale with %expectation $1$ by the SBB system), 
            then
			\begin{itemize}
				\item $(Y_t)_t$ is a Brownian motion under $\hat\Q$ with initial law $\nu_0$, and is a diffusion Sch\"odinger bridge (DSB)  under $\P^{SBB}$: 
\beq \label{DSBY} 
\d Y_t &=& \nabla_y \log h_t(Y_t) \d t + \d W_t, \quad Y_0 \sim \msY_0\#\mu_0, \; Y_T \sim \msY_T\#\mu_T. 
\enq
\item $X_t = \msY_t^{-1}(Y_t) = Y_t + \frac{1}{\beta} \nabla_y \log h_t(Y_t)$, $t \in [0,T]$, is a stretched Brownian motion under $\hat\Q$, and a stretched diffusion Schr\"odinger bridge under $\P^{SBB}$. 
%\item The dynamics under $\P^{\rm SBB}$ are:
%\begin{equation*}
%\begin{cases}
%\d X_t = D_y^2 \Phi_t(Y_t) \, \d Y_t, \\
%\d Y_t = \varepsilon \nabla_y \log h_t^*(Y_t) \, \d t + \sqrt{\varepsilon} \, \d W_t, 
%\end{cases} 
%\quad t \in [0,T].
%\end{equation*}
\end{itemize}
			
To generate new samples from $\mu_T$ through the learned SBB system, the process $Y = \msY(X)$ can be generated as a DSB from $\msY_0\#\mu_0$ to $\msY_T\#\mu_T$, 
%\begin{equation}
%\label{eq:Y_process}
%\d Y_t = \varepsilon \nabla_y \log h_t^*(Y_t) \, \d t + \sqrt{\varepsilon} \, \d W_t,
%\end{equation}
with score drift $s_t(Y_t) = \nabla_y \log h_t(Y_t)$. Then, $X_T$ can be recovered by
\beqs
X_T &=& \msY_T^{-1}(Y_T) = Y_T + \frac{1}{\beta} \nabla_y \log h_T(Y_T) \sim \mu_T.
\enqs

\section{Schr\"odinger Bridge Bass  for Time Series Problem}

In this section, we extend the SBB problem to time series framework. We are now given 
a joint distribution $\mu$ $\in$ $\Pc((\R^d)^{n+1})$ corresponding to the law of a time series on $\R^d$ observed at $n+1$ dates $t_0$ $=$ $0$ $<$ $\ldots$ $<$ $t_i$ $<$ $\ldots$ $t_n$ $=$ $T$.

\subsection{Problem Formulation} \label{sec: Schr\"odinger Bass Bridge time series problem}

We aim to construct on the canonical space $\Omega$ $=$ $C([0,T],\R^d)$ a probability measure $\P$ $\in$ $\Pc$, which minimizes the quadratic cost $J(\P)$ as in \eqref{defJ}, but now under the constraint that 
$\P$ $\circ$ $(X_{t_0},\ldots,X_{t_n})^{-1}$ $=$ $\mu$, i.e.,  $(X_{t_0},\ldots,X_{t_n})$ $\sim$ $\mu$ under $\P$. We denote by $\Pc(\mu)$ the set of such probability measures $\P$ satisfying this  joint distribution  constraint, and its optimal value by   
\begin{align} \label{defSBBTS} 
{\rm SBBTS}(\mu) &  := \; \inf_{\P\in\Pc(\mu)} J(\P). 
\end{align}
Problem \eqref{defSBBTS} is called  the Schr\"odinger  bridge Bass time series interpolation problem, and  the solution to ${\rm SBBTS}(\mu)$: $X_t$ $=$ $X_0 + \int_0^t \alpha_s^* \d s + \int_0^t \sigma_s^* \d W^*_s$, $0\leq t\leq T$, is called SBBTS diffusion process. 
				
To handle the joint distribution constraint, we exploit its factorization into conditional distributions across time. 	In the sequel, for $(X_{t_0},\ldots,X_{t_n})$ $\sim$ $\mu$, we set $X_{t_0:t_i}$ $:=$ $(X_{t_0},\ldots,X_{t_i})$ for $i$ $=$ $0,\ldots,n$, and denote by  
				$\mu_i$ the law of $X_{t_0:t_i}$,  and by  $\mu_{i+1|0:i}(\cdot|x_{0:i})$ the conditional distribution of $X_{t_{i+1}}$ given $X_{t_0:t_i}$ $=$ $x_{0:i}$ $:=$ $(x_0,\ldots,x_i)$ $\in$ $(\R^d)^{i+1}$. We then have the chain  probability rule: 
				$\mu_{i+1}$ $=$ $\mu_i\mu_{i+1|0:i}$, and so  $\mu$ $=$ $\mu_n$ $=$ $\mu_0\prod_{i=0}^{n-1} \mu_{i+1|0:i}$. We also note the time step $\Delta t_i = t_{i+1} - t_i$.

				\subsection{Explicit Construction of the Solution to SBBTS}

				\label{sec: Explicit construction of the solution to SBBTS} 
				
				% \subsection{Assumptions}

				We make the following assumptions.

				\begin{assum} \label{HypSBBTS}
					For any $x_{0:i}$ $:=$ $(x_0,\ldots,x_i)$ $\in$ $(\R^d)^{i+1}$, we make the standing assumption  that $\mu_{i+1|0:i}$ $=$ $\mu_{i+1|0:i}(\cdot|x_{0:i})$ has finite second moment,  and that 
					%\beqs
					%\E_{\mu_{i+1|0:i}}  \| X_{t_{i+1}}\|^2  %&=&  \int \|x\|^2 \mu_{i+1|0:i}(\d x| %x_{0:i}) \; < \; \infty. 
					%\enqs
					%Moreover,  we assume 
                    it is  absolutely continuous w.r.t. $\Nc_{t_{i+1}-t_i}$ with a positive and continuous Radon-Nikodym density 
					$\frac{\d\mu_{i+1|0:i}}{\d\Nc_{t_{i+1}-t_i}}$ on $\R^d$, and finite relative entropy (or Kullback-Leibler distance): 
					\beqs
					{\rm KL}(\mu_{i+1|0:i}| \Nc_{t_{i+1}-t_i}) &:= & \int  \big[ \log \frac{\d\mu_{i+1|0:i}}{\d\Nc_{t_{i+1}-t_i}} \big] \d \mu_{i+1|0:i}  \; < \; \infty. 
					\enqs
				\end{assum}

				We show that the optimal interpolation problem of a joint distribution can be reduced to a sequence of classical semimartingale optimal transport problems on each interval $[t_i,t_{i+1})$, $i$ $=$ $0,\ldots,n-1$ with marginal constraints.  More precisely, we have the following decomposition result:

\begin{theorem} \label{propreduc} 					
Assume that $\beta \Delta t_i$ $>$ $1$ for all $i$ $=$ $0,\ldots,n-1$. We have 
\begin{align}
{\rm SBBTS}(\mu)  &= \; \E_\mu \Big[ \sum_{i=0}^{n-1} V_i(X_{t_0:t_i}) \Big] =    \int   \sum_{i=0}^{n-1}  V_i(x_{0:i}) \mu(\d x_{0:n}) \; = \;  \sum_{i=0}^{n-1} \int  V_i(x_{0:i}) \mu_i(\d x_{0:i}), 
\end{align} 
where 
\begin{align} \label{defVi} 
V_i(x_{0:i}) & =  {\rm SBB}(\delta_{x_i},\mu_{i+1|0:i}(\cdot|x_{0:i})) 
             =  \inf_{\scriptstyle \P \in \Pc^i(\mu_{i+1|0:i}(\cdot|x_{0:i}))}
						% \inf_{(\alpha,\sigma) \in \Ac_{\mu_i}} 
						% \E\Big[  \int_{t_i}^{t_{i+1}} H(\alpha_t,\sigma_t)  \d t \Big], 
						% \E\Big[ \frac{1}{2} \int_{t_i}^{t_{i+1}}  |\alpha_t|^2_{}  + \beta |\sigma_t - I_d |^2 \d t \Big], 
						\E_\P\Big[ \frac{1}{2} \int_{t_i}^{t_{i+1}}  \|\alpha_t\|^2  + \beta \|\sigma_t - I_d \|^2 \d t  
						%\big| X_{t_0:t_i} = x_{0:i} 
						\Big], 
\end{align}
and 
					% $\Ac^i_{\mu_{i+1|0:i}(\cdot|x_{0:i})}$ 
$\Pc^i(\mu_{i+1|0:i}(\cdot|x_{0:i}))$  is the set of elements $\P$ $\in$ $\Pc$ s.t.   $\P\circ X_{t_i}^{-1}$ $=$ $\delta_{x_i}$ and $\P\circ X_{t_{i+1}}^{-1}$ $=$ $\mu_{i+1|0:i}(\cdot|x_{0:i})$.  
\end{theorem}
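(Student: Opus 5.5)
We prove the identity through two inequalities, lower bound and upper bound, built on the chain rule $\mu=\mu_0\prod_i\mu_{i+1|0:i}$ and on conditioning the path measure at the observation dates. (There is a factor $\frac12$ in \eqref{defVi} that is absent in \eqref{defJ}; we read the two costs with the same normalisation, which only rescales both sides.)

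\emph{Lower bound.} Fix $\P\in\Pc(\mu)$ with $J(\P)<\infty$, and split
\[
J(\P)=\sum_{i=0}^{n-1}\E_\P\Big[\int_{t_i}^{t_{i+1}}\|\alpha_t\|^2+\beta\|\sigma_t-I_d\|^2\,\d t\Big].
\]
For each $i$, take a regular conditional probability $\P^{x_{0:i}}$ of $\P$ given $X_{t_0:t_i}=x_{0:i}$. Restricted to the path on $[t_i,t_{i+1}]$, it starts at $\delta_{x_i}$ and has terminal law $\mu_{i+1|0:i}(\cdot|x_{0:i})$ for $\mu_i$-a.e.\ $x_{0:i}$, since $\P$ has joint law $\mu$ at the dates.

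The key technical point is that $X$ keeps, under $\P^{x_{0:i}}$, a diffusion decomposition on $[t_i,t_{i+1}]$ with the same coefficients $(\alpha,\sigma)$ and a new Brownian motion. Conditioning on $\Fc_{t_i}$-measurable data preserves the semimartingale characteristics on $[t_i,t_{i+1}]$. One then extends the path on $[0,t_i]$ arbitrarily, or works on the time-shifted canonical space; the costs of the SBB problem on $[t_i,t_{i+1}]$ are time-shift invariant, so this changes nothing. We would prove this via the martingale problem: the processes $\varphi(X_t)-\int_{t_i}^t \mathcal L_s\varphi\,\d s$ are $\P$-martingales, hence also martingales under a.e.\ regular conditional law given $\Fc_{t_i}$, using a countable determining family of $\varphi$ and rational times. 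Consequently
\[
\E_\P\Big[\int_{t_i}^{t_{i+1}}\cdots\Big]=\int \E_{\P^{x_{0:i}}}\Big[\int_{t_i}^{t_{i+1}}\cdots\Big]\mu_i(\d x_{0:i})\ \ge\ \int V_i\,\d\mu_i .
\]
Summing over $i$ and taking the infimum over $\P$ gives ${\rm SBBTS}(\mu)\ge\sum_i\int V_i\,\d\mu_i$. This step also needs measurability of $V_i$. We would obtain it from the explicit SBB solution recalled in Section~\ref{sec:SBB_recall}, whose value depends measurably on the (continuous, positive) conditional density, or else argue with universally measurable selections.

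\emph{Upper bound.} Since $\beta\Delta t_i>1$ and Assumption~\ref{HypSBBTS} gives ${\rm KL}(\mu_{i+1|0:i}|\delta_{x_i}*\Nc_{\Delta t_i})<\infty$ (after translating by $x_i$, or under the reading that the density is relative to the shifted Gaussian), the existence result of Section~\ref{sec:SBB_recall} yields an optimal $\P^{i,x_{0:i}}\in\Pc^i(\mu_{i+1|0:i}(\cdot|x_{0:i}))$ attaining $V_i(x_{0:i})$. It is given through the triple $(h,\nu,\msY)$, with feedback coefficients $\alpha^*,\sigma^*$ depending measurably on $x_{0:i}$. We then concatenate: on $[t_i,t_{i+1})$, given the path up to $t_i$ (hence $X_{t_0:t_i}$), run the SBB diffusion $\P^{i,X_{t_0:t_i}}$.

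By induction the resulting measure $\P^*$ has $(X_{t_0},\dots,X_{t_{i+1}})\sim\mu_i\,\mu_{i+1|0:i}=\mu_{i+1}$, so $\P^*\in\Pc(\mu)$. Pasting Brownian motions across intervals gives a global decomposition \eqref{decdiffusion}, and $J(\P^*)=\sum_i\int V_i\,\d\mu_i$ by the tower property. If measurable selection of optimisers is delicate, we would instead take $\eps$-optimal measurable selections, or use the explicit formulas directly.

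\emph{Main obstacle.} The expected obstacle is the lower-bound conditioning step: showing that regular conditional laws given $X_{t_0:t_i}$ stay in $\Pc$ with the same cost density. This requires care, since $\alpha,\sigma$ are progressively measurable with respect to the whole path filtration. We would handle it via the martingale-problem characterisation and Itô's formula, together with a measurable-selection argument for the gluing.
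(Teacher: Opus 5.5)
Your proposal is correct and is essentially the paper's argument: regular conditional laws given $X_{t_0:t_i}$ for the lower bound, measurable ($\eps$-)optimal selection plus concatenation for the upper bound, with the paper's induction $\bar V_{i+1}=\bar V_i+\E_{\mu_i}[V_i(X_{t_0:t_i})]$ flattened into a single sum over intervals. The paper works only with $\eps$-optimal universally measurable selections, so it never needs $\beta\Delta t_i>1$ or measurable dependence of exact SBB optimizers; it also asserts without justification the point you flag as the main obstacle, namely that the conditional laws stay in $\Pc$ with the same coefficients. (Incidentally, finite KL with respect to the shifted Gaussian $\delta_{x_i}*\Nc_{\Delta t_i}$ already follows from the stated assumption together with the finite second moment, so no reinterpretation of the assumption is needed.)
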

				
				The proof of Theorem~\ref{propreduc} can be found in Appendix~\ref{sec:proof_theorem}. The dynamic programming type decomposition  in the above proposition shows that  the diffusion solution to the SBBTS problem can be constructed sequentially from the resolution of  the  optimal transport problems $V_i$ and concatenating the processes defined on the intervals $[t_i, t_{i+1}]$ for $i = 0 ,\ldots,n-1$. 
				Specifically, at  time step  $t_i$, after computing the optimal values $(\alpha_t^*, \sigma_t^*)_{t=0}^{t_i}$, we can simulate the process over the time interval $[0, t_i]$, and encode the obtained values  $X_{t_0:t_i} = x_{0:i}$. Then, we solve the optimal transport problem 
				$V_i(x_{0:i})$ that transports the Dirac measure $\delta_{x_i}$ at time $t_i$ to the measure $\mu_{i+1|0:i}$ at time $t_{i+1}$, to get   $(\alpha_t^*, \sigma_t^*)_{t = t_i}^{t_{i+1}}$, and continue this process until a solution over the entire interval $[0, T]$ is obtained. 
				%In the next section, we present the  resolution of the static optimal transport problem $V_i$.  

				\section{Algorithm for the SBBTS Problem}
				
				As mentioned in Section~\ref{sec:SBB_recall},  one may generate the auxiliary process $Y$  in \eqref{eq:msY_def}, which solves a classical SB, and then recover $X$ via the inverse transport map. 
				In practice, the parameter $\beta$ is never chosen too small. Indeed, the constraint
				$\beta > \frac{1}{\Delta t_i}$ together with the typical time resolution of financial time
				series makes large values of $\Delta t_i$ undesirable. In this regime, following
				\cite{alouadi2026lightsbbmbridgingschrodingerbass}, the transport map admits the large-$\beta$ approximation:
				\begin{align}
					\label{eq:msY_approx}
					\msY_t(x)
					= x -  \frac{1}{\beta} \nabla_y \log h_t(\msY_t(x))
					\simeq x - \frac{1}{\beta} \nabla_y \log h_t(x),
					\qquad t \in [t_i,t_{i+1}].
				\end{align}
				
				We therefore follow the general structure of the large-$\beta$ algorithm proposed in
				\cite{alouadi2026lightsbbmbridgingschrodingerbass}. However, we found the Light-SB approach to be insufficiently flexible
				for time series data, as the weights of the Gaussian mixture are fixed. Instead, we
				parametrize the drift using a neural network $s_{\theta}$, which takes as inputs the
				current time $t \in \R$, the current state $Y_t \in \R^d$, and an embedding vector encoding
				the past trajectory. More precisely, for each $i$, we define
				\[
				c_i := \Phi_{\theta}(Y_{t_0:t_i}),
				\]
				where $\Phi_{\theta}$ is an encoder-only network. We illustrate in Figure~\ref{fig:netw} the architecture of the neural network $s_\theta$ used to parametrize the drift, and more details can be found in Appendix~\ref{sec:nn_archi}. \\
				
				\begin{figure}
					\centering
                    \includegraphics[width=\columnwidth]{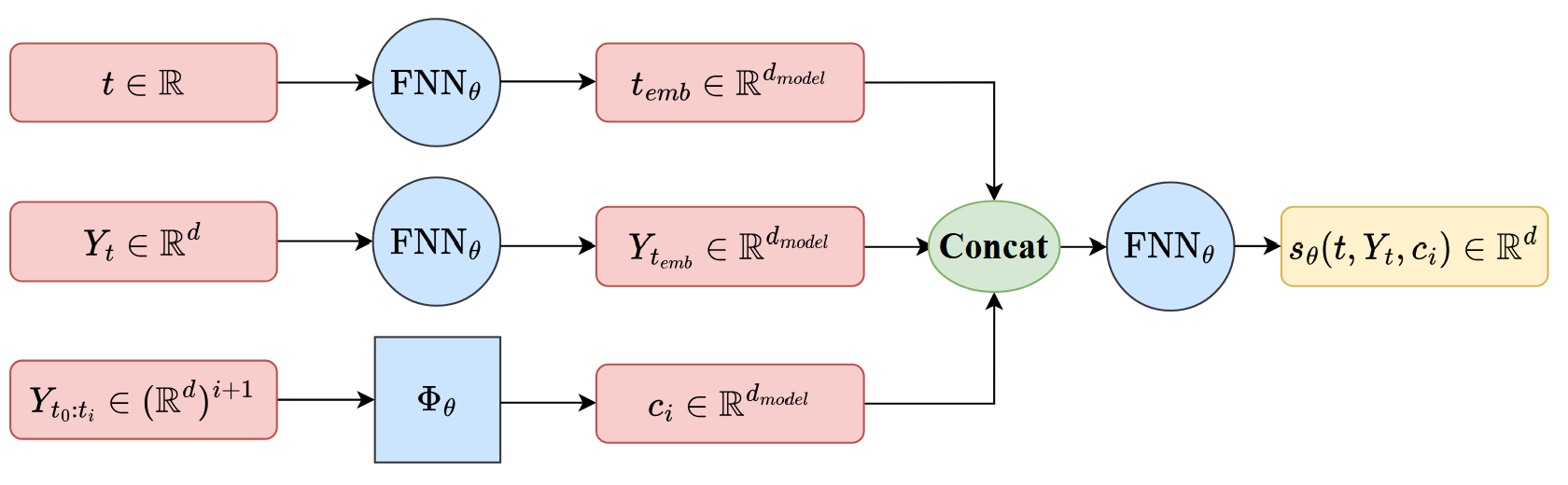}
					\caption{Architecture of the model $s_{\theta}$}
					\label{fig:netw}
				\end{figure}

				The parameters $\theta$ are learned by minimizing the following loss function, averaged
				over all time intervals:
				\begin{align}
					\label{eq:loss_training}
					\Lc(\theta)
					=
					\frac{1}{N}
					\sum_{i=0}^{N-1}
					\E_{t \sim \mathcal{U}([t_i, t_{i+1}))}
					\E_{\substack{
							Y_{t_{i+1}} \sim \mu_{i+1|0:i}, \\
							Y_t \sim \W_{|Y_{t_i}, Y_{t_{i+1}}}
					}}
					\left[
					\left\|
					s_{\theta}\bigl(t, Y_t, \Phi_{\theta}(Y_{t_0:t_i})\bigr)
					-
					\frac{Y_{t_{i+1}} - Y_t}{t_{i+1} - t}
					\right\|_2^2
					\right]
				\end{align}
				
				Here, $\W_{|y_{t_i}, y_{t_{i+1}}}$ denotes the law of the Brownian bridge between $y_{t_i}$ and $y_{t_{i+1}}$. Explicitly, for
				$t \sim \mathcal{U}([t_i, t_{i+1}))$ and $Z \sim \mathcal{N}(0,I_d)$,
				\begin{align}
					\label{eq:brownian_bridge}
					y_t
					=
					\frac{t_{i+1} - t}{\Delta t_i} y_{t_i}
					+
					\frac{t - t_i}{\Delta t_i} y_{t_{i+1}}
					+
					\sigma_t \, Z,
					\qquad
					\sigma_t^2
					=
					\frac{(t - t_i)(t_{i+1} - t)}{\Delta t_i}.
				\end{align}
				
				As in \cite{alouadi2026lightsbbmbridgingschrodingerbass}, the transport map is updated iteratively and initialized as the
				identity. This choice is natural in the present setting, since we consider moderately
				large values of $\beta$, corresponding to a regime close to the classical SB, for which $\msY = I_d$. The complete training procedure is summarized in Algorithm~\ref{alg:sbb_algo_beta_large}.
				
				\begin{algorithm}[H]
					\caption{SBBTS training algorithm}
					\label{alg:sbb_algo_beta_large}
					\begin{algorithmic}[1]
						\REQUIRE Samples $(X_{t_0}^m, \cdots, X_{t_N}^m)_{m \leq M} \sim \mu$, $\theta$,  $\beta$,  $K>0$, batch
						size $B$
						\STATE \textbf{Initialization:} set $\msY^0 = I_d$ (hence $s_\theta^0 \equiv 0$)
						\FOR{$k = 0, \cdots, K-1$}
						\REPEAT
						\STATE Draw a mini-batch $(X_{t_0}^b, \cdots, X_{t_N}^b)_{b \leq B}$
						\vspace{0.2em}
						\STATE Compute
						\[
						\left\{
						Y_{t_i}^b
						=
						X_{t_i}^b
						-
						\frac{1}{\beta}
						s_\theta^k(t_i, X_{t_i}^b, \Phi_\theta^k(X_{t_0:t_i}^b))
						\sim
						\msY_{t_i}^k \# \delta_{X_{t_i}^b}
						\right\}_{i \leq N-1}
						\]
						\STATE Compute
						\[
						\left\{
						Y_{t_{i+1}}^b
						=
						X_{t_{i+1}}^b
						-
						\frac{1}{\beta}
						s_\theta^k\bigl(t_{i+1}, X_{t_{i+1}}^b, \Phi_\theta^k(X_{t_0:t_i}^b)\bigr)
						\sim
						\msY_{t_{i+1}}^k \# \mu_{i+1|0:i}
						\right\}_{i \leq N-1}
						\]
						\STATE Sample
						$
						\{Y_t^b \sim \mathbb{W}^{}_{|Y_{t_i}^b, Y_{t_{i+1}}^b}\}_{i \leq N-1}
						$
						using~\eqref{eq:brownian_bridge}
						\STATE Update $\theta^k$ by minimizing~\eqref{eq:loss_training}
						\UNTIL{convergence}
						\STATE $\theta^{k+1} \leftarrow \theta^k$
						\ENDFOR
						\STATE \textbf{Return} $\theta^K$
					\end{algorithmic}
				\end{algorithm}
				
				Once the drift $s_\theta^K \simeq \nabla_y \log h$ has been learned, new time series
				samples can be generated as follows. First compute
				\[
				Y_{t_0} = X_{t_0} - \frac{1}{\beta} s_\theta^K(t_0, X_{t_0}, \Phi_\theta^K(X_{t_0})).
				\]
				Then simulate the dynamics~\eqref{DSBY} on the interval $[t_0,t_1)$ using the
				drift $s_\theta^K\bigl(t, X_t, \Phi_\theta^K(Y_{t_0})\bigr)$, and recover
				\[
				X_{t_1}
				=
				Y_{t_1}
				+
				\frac{1}{\beta}
				s_\theta^K\bigl(t_1, Y_{t_1}, \Phi_\theta^K(Y_{t_0})\bigr).
				\]
				Starting from $Y_{t_1}$, the procedure is repeated sequentially to obtain $Y_{t_2}$ and
				so on.
				
				Note that the target score is not well defined at $t = t_{i+1}$. In practice, relying on
				the continuity of $\log h$, we evaluate it instead at
				$\tilde t_{i+1} = t_{i+1} - \xi$, for some $\xi > 0$.

				\section{Numerical Experiments}
				
				In this section, we empirically assess the effectiveness of the SBBTS algorithm on a variety of time series models, ranging from low-dimensional synthetic examples to high-dimensional real-world datasets, with applications to time series forecasting. The general implementation settings are described in Appendix~\ref{sec:details_num_exp}.

				\subsection{Heston Process}
				
				In this part, we follow the experimental framework introduced in
				\cite{Alouadi_2025} to assess the robustness of the SBBTS model. The objective is to
				recover the parameters of the parametric two-dimensional Heston model with stochastic volatility, defined by
				\[
				\begin{cases}
					dX_t = r X_t \, dt + \sqrt{v_t}\, X_t \, dW_t^X, \\
					dv_t = \kappa(\theta - v_t)\, dt + \xi \sqrt{v_t}\, dW_t^v,
				\end{cases}
				\]
				where $\kappa > 0$, $\theta > 0$, $\xi > 0$, $r \in \R$, and
				$\rho := \mathrm{Corr}(W_t^X, W_t^v) \in [-1,1]$ denote the model parameters.
				
				In this setting, each parameter vector is independently sampled from a prescribed
				range, so that the training dataset consists of Heston time series generated under
				heterogeneous parameter configurations. The generative model is then fit on this
				dataset to generate new synthetic time series. Finally, the Heston parameters are
				estimated on each generated sample using a maximum likelihood approach, allowing
				us to evaluate the ability of the model to preserve the underlying parametric
				structure. In our experiments, we use $5000$ real trajectories of length $252$ for training and generate a synthetic dataset of $5000$ trajectories. Moreover, we benchmarked the results of SBBTS with the SBTS model \cite{hamdouche2023generative}. \\

				\begin{figure}[htbp]
					\centering
					
					% First row: large images
					\begin{subfigure}{0.48\textwidth}
						\centering
						\includegraphics[width=\linewidth]{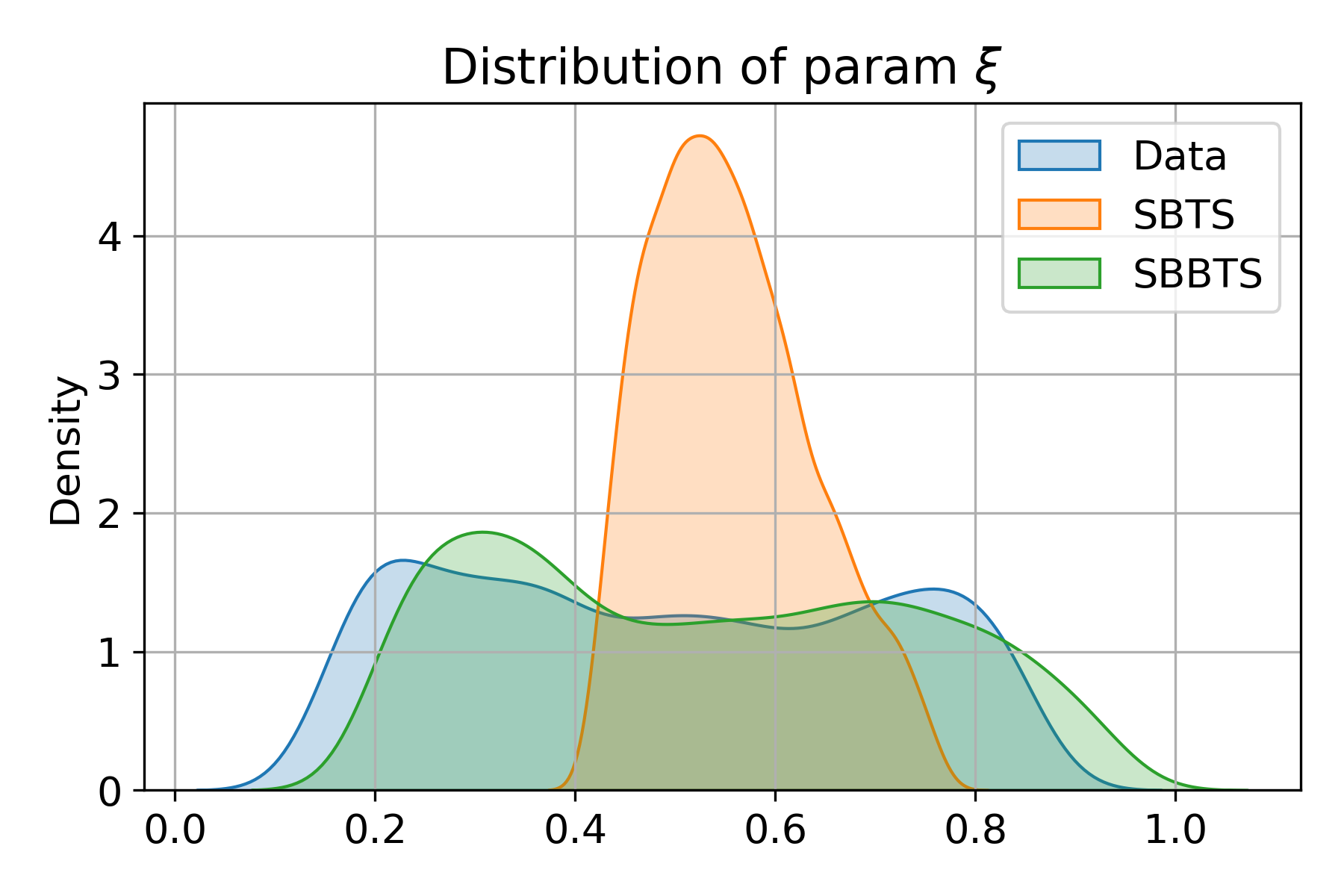}
					\end{subfigure}
					\hfill
					\begin{subfigure}{0.48\textwidth}
						\centering
						\includegraphics[width=\linewidth]{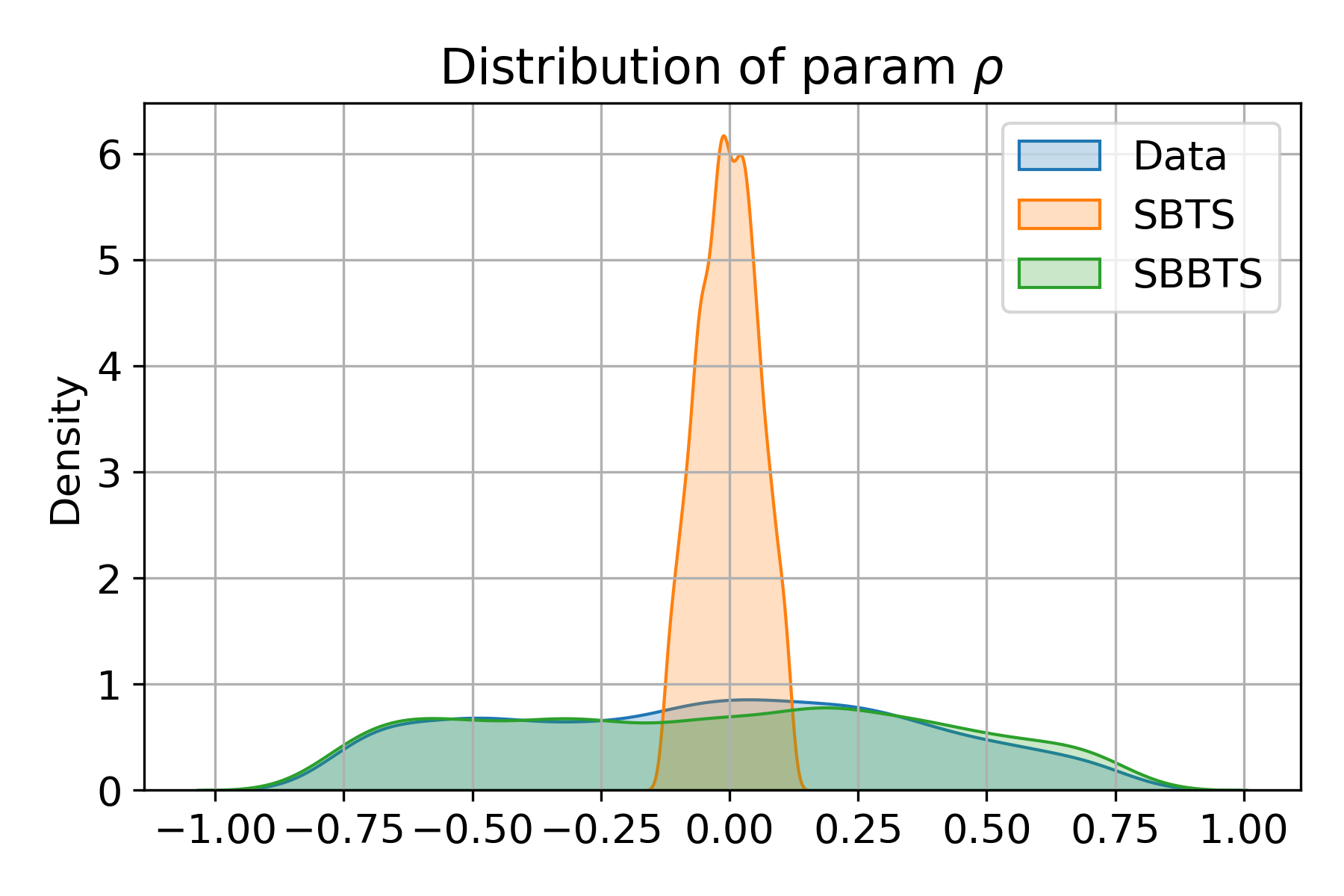}
					\end{subfigure}
					
					\vspace{0.5em}
					
					% Second row: smaller images
					\begin{subfigure}{0.3\textwidth}
						\centering
						\includegraphics[width=\linewidth]{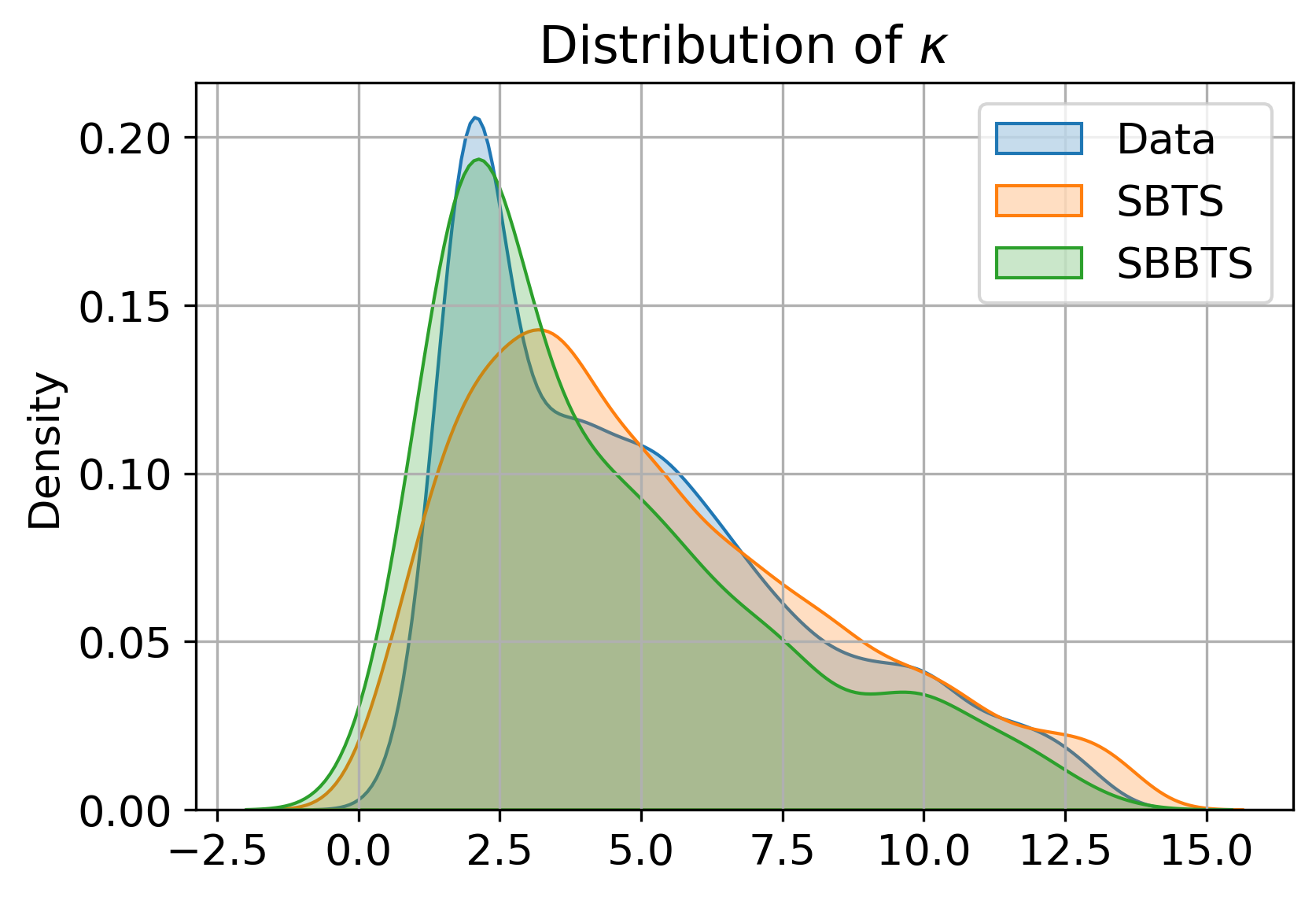}
					\end{subfigure}
					\hfill
					\begin{subfigure}{0.3\textwidth}
						\centering
						\includegraphics[width=\linewidth]{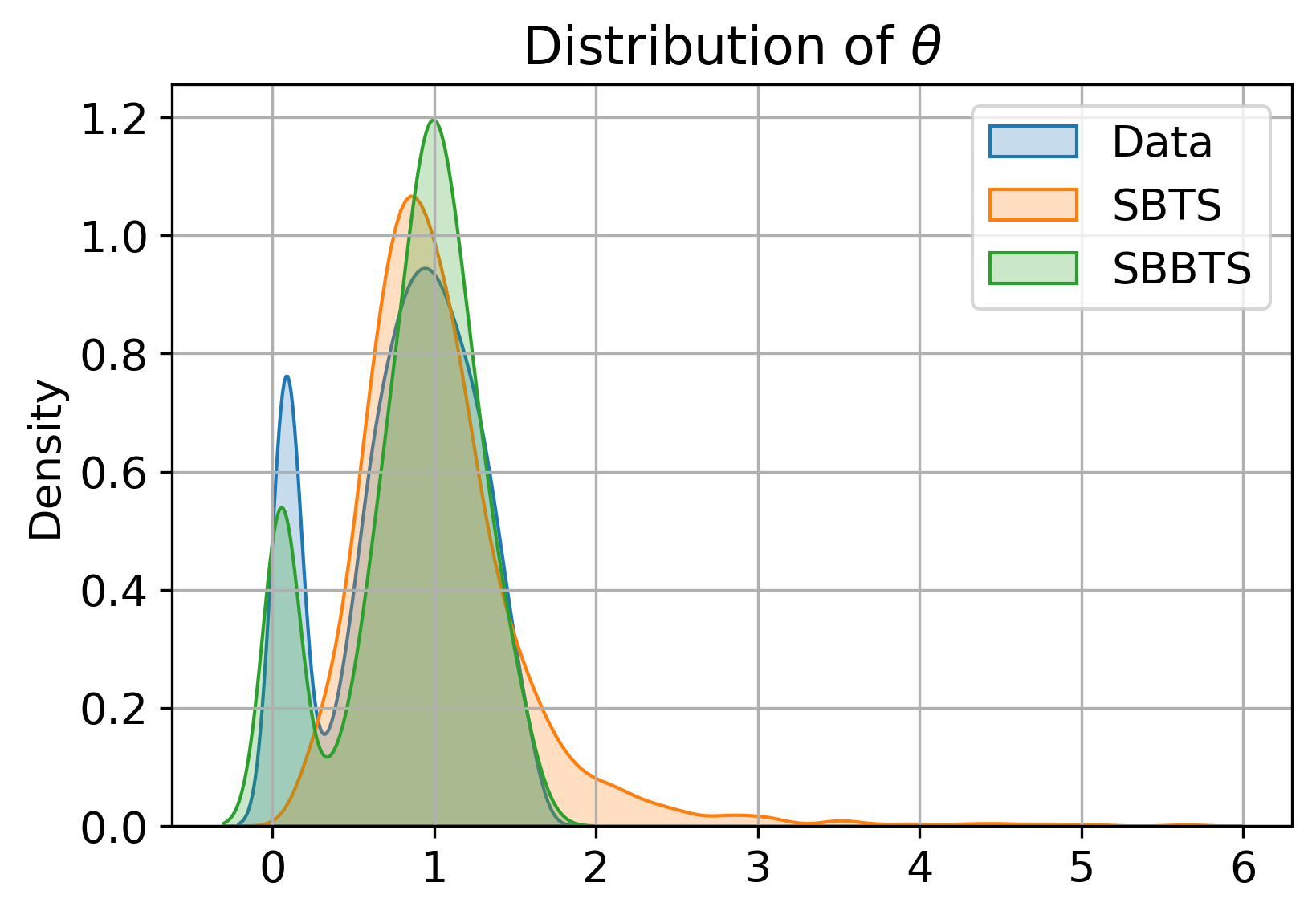}
					\end{subfigure}
					\hfill
					\begin{subfigure}{0.3\textwidth}
						\includegraphics[width=\linewidth]{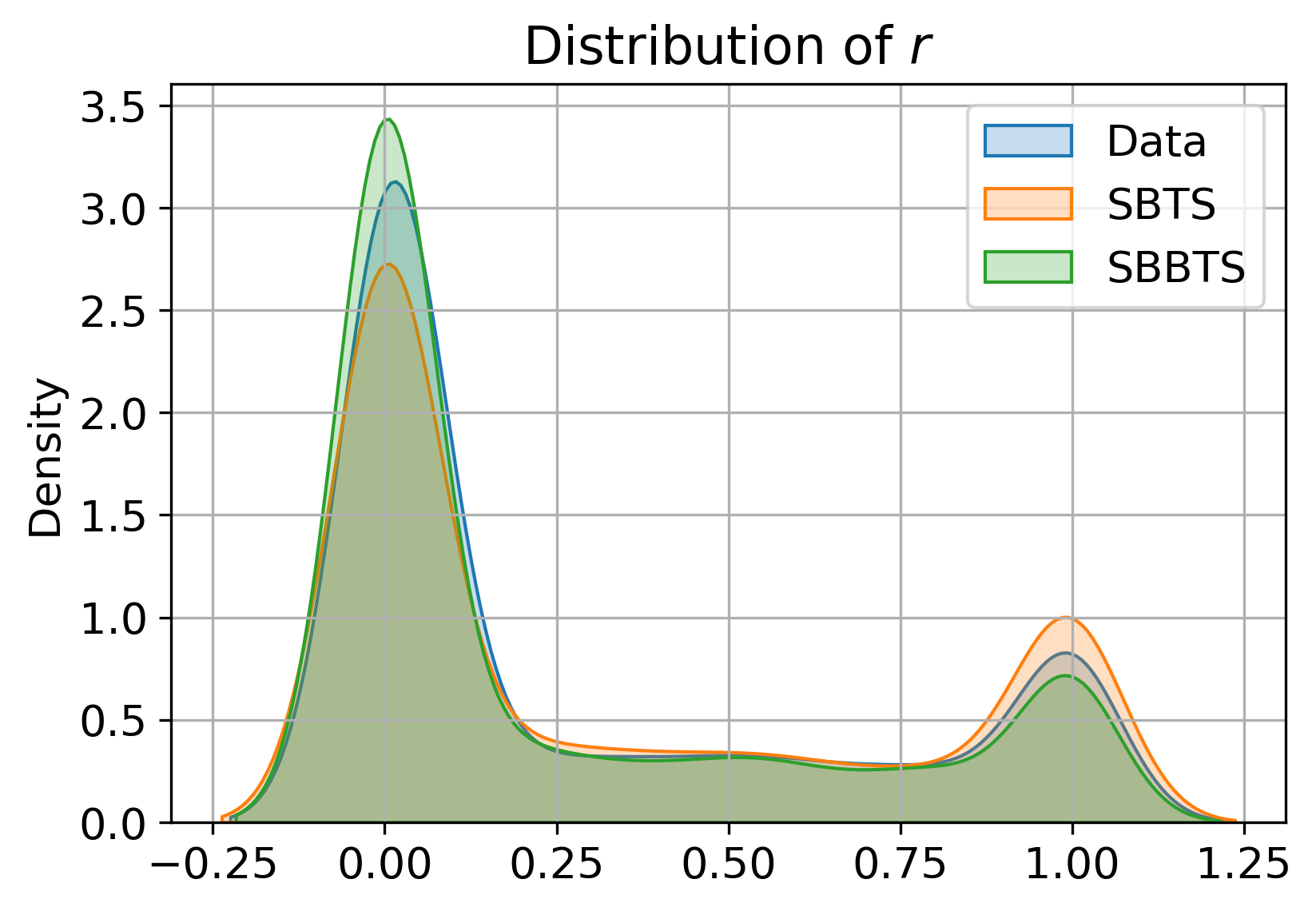}
					\end{subfigure}
					
					\caption{Distribution of estimated Heston parameters using MLE. We show in blue, orange and green the density respectively from the data, SBTS, and SBBTS samples.}
					\label{fig:heston_res}
				\end{figure}
				
				Figure~\ref{fig:heston_res} shows that the SBBTS model more accurately captures the full real range for all parameters and aligns well with the real data distribution. In contrast, the previous SBTS model failed to reproduce the “vol of vol” $\xi$ and the correlation $\rho$. This discrepancy is due to the condition $\P \ll \W$ in the SB framework ---but not in SBB---, which fixes the quadratic variation of the generated paths and precludes stochastic volatility and correlated noise. Consequently, diffusion-driven
				parameters ($\xi$, $\rho$) cannot be faithfully encoded and are projected onto an effective average, yielding a concentrated distribution around the center of the parameter range, while drift-related parameters ($\kappa$, $\theta$, $r$) remain identifiable and well recovered.

				\subsection{Data Augmentation for Time Series Forecasting}
				
				In this part, we evaluate the impact of synthetic time series data on a real-world forecasting task. Additional details can be found in Appendix~\ref{subec:data_augmentation_details}.
				
				\subsubsection{Problem Definition}
				
				In this part, we focus on time series forecasting. Let $X = (X_{t_0}, \cdots, X_{t_{N-1}}) \in (\R^d)^N $, with $d$ the number of instruments, be a time series of daily stock returns. The goal is to predict the probability that the sign of the next daily return is positive. Hence, the predictive model produces an output $\Psi_{\theta}(X_{t_0:t_{N-1}}) = \hat{p}_{t_N} \in [0,1]$, representing the estimated probability that the next return is positive. Since financial returns are mostly noise, we generally expect $\hat{p}_t$ to be close to $0.5$. The objective is to capture any predictive signal - often referred to as \textit{alpha} - which can be expressed as the deviation $|\hat{p}_t - \mathrm{sign}(x_t)|$, where $\mathrm{sign}(X_{t_N}) \in \{0,1\}$ denotes the true direction of the next return. As this is a binary classification problem, the model is trained using the Binary Cross-Entropy loss.
				
				\subsubsection{Predictive Model: TabICL}
				
				For these experiments, we used TabICL \cite{qu2025tabicl}, a transformer‑based tabular foundation model that  achieved state-of-the-art results on TabArena Benchmark \cite{erickson2025tabarenalivingbenchmarkmachine}. It has been pre‑trained exclusively on synthetic datasets, a design choice that mirrors the synthetic‑only training paradigm central to our experiment. Note that TabICL operates in a zero‑shot manner: the original weights released by the authors—used directly for inference without any additional fine‑tuning—will be referred to below as \textit{Zero-Shot}. While \cite{garg2025realtabpfnimprovingtabularfoundation} demonstrated that adding a real‑data fine-tuning stage enhances performance, our work maintains the purely synthetic training regime to investigate how far synthetic augmentation alone can drive accurate prediction of daily return direction.
				
				\subsubsection{Data}
				
				In these experiments, we use daily stock returns from the S\&P~500 over the period
				from 2010-01-05 to 2021-12-31. The dataset consists of $433$ tradable instruments and is
				sourced from \cite{cetingoz2025syntheticdataportfoliosthrow}. The data are split into a
				training set spanning 2010-01-05 to 2018-12-31, a validation set from 2019-01-01 to
				2020-06-30, and a test set from 2020-07-01 to 2021-12-31. Since TabICL operates on tabular data, the time series are transformed into feature
				representations. These features are constructed both independently for each
				instrument and jointly to capture cross-sectional dependencies, using a maximum
				lookback window of $252$ days, corresponding to approximately one trading year. \\
				
				Note that, in order to generate the full set of $433$ stocks, we adopt the dimensionality reduction approach proposed in \cite{cetingoz2025syntheticdataportfoliosthrow}, which
				combines principal component analysis (PCA) with clustering techniques.
				
				\subsubsection{Metrics}
				
				In order to evaluate the predictive power on the model and the impact of synthetic data, we are using metrics that can be split into two dimensions. \\
				
				\textbf{Classification metrics }
				
				\begin{enumerate}
					\item \bef{Accuracy: }  First, we convert the predicted probability $\hat{p}_{t_N}$ into a binary predicted sign using the rule
					\begin{equation*}
						\hat{\mathrm{sign}}(X_{t_N}) =
						\begin{cases}
							1, & \text{if } \hat{p}_{t_N} \geq 0.5, \\
							0, & \text{otherwise.}
						\end{cases}
					\end{equation*}
					The classification accuracy is then computed as $  \text{Accuracy} = \frac{1}{M} \sum_{m=1}^{M} 
					1_{\{\hat{\mathrm{sign}}(X_{t_m}) = \mathrm{sign}(X_{t_m})\}}$, where $M$ denotes the number of samples in the evaluation set. 
					
					\item \bef{Log Loss: } It is defined as 
					\[
					L(X, p) = - \frac{1}{M} \sum_{m=1}^{M} \Big[ \mathrm{sign}(X_{t_m}) \log(\hat{p}_{t_m}) + (1 - \mathrm{sign}(X_{t_m})) \log(1 - \hat{p}_{t_m}) \Big] 
					\]
					\item \bef{ROC AUC Score: } It measures how well a model ranks positive instances higher than negative ones, with $1$ being perfect ranking and $0.5$ being random. 
					
				\end{enumerate} 
				
				\vspace{1em}
				
				\bef{Financial Metrics}
				
				\begin{enumerate}
                    \item \textbf{Daily PnL: } For each day, we compute the position vector 
                    $\mathbf{w}_{t_m} = 2 \times \hat{\mathbf{p}}_{t_m} - \mathbf{1} \in [-1, 1]^d$,
                    where $\hat{\mathbf{p}}_{t_m} \in [0,1]^d$ is the vector of predicted probabilities 
                    of a positive return across all $d$ instruments. The daily PnL is then
                    \[
                    \text{PnL}_{t_m} = \frac{1}{d}\, \mathbf{w}_{t_m}^\top \mathbf{R}_{t_m},
                    \]
                    with $\mathbf{R}_{t_m} \in \mathbb{R}^d$ being the vector of true returns at time 
                    $t_m$ across all instruments. \textit{Note that we assume no transaction cost.}
					
					\item \bef{PnL Standard Deviation: } The standard deviation of the average daily PnL:
					\[
					\sigma_\text{PnL} = \sqrt{\frac{1}{M-1} \sum_{m=1}^{M} \big(\text{PnL}_{t_m} - \overline{\text{PnL}}\big)^2}.
					\]
					
					where $\overline{\text{PnL}}$ is the average daily PnL.
					
					\item \bef{Sharpe ratio: } Defined as the annualized ratio of the average daily PnL to its standard deviation:
					\[
					\text{Sharpe ratio} = \frac{\overline{\text{PnL}}}{\sigma_\text{PnL}} \sqrt{252}.
					\]
				\end{enumerate}

    \subsubsection{Results}
    \label{subsec:results_forecasting}
    
    In this section, we assess the impact of synthetic data generated by \textsc{SBBTS} on downstream forecasting performance.
    All reported metrics are averaged over \textbf{5 independent random seeds}.
    For each metric, we report the mean across seeds, while the vertical error bars represent the corresponding standard deviation.
    
\paragraph{Overall comparison on the test set.}
    Table~\ref{tab:test_comparison} reports the predictive and financial performance of TabICL on the test set under different
    training regimes: zero-shot inference, training with real data only, and training with augmented synthetic \textsc{SBBTS} samples only. In the latter setting, we used $200$ times more synthetic paths than in the real dataset. Results are averaged over 5 independent random seeds; standard deviations are reported in parentheses.

    To verify that the gains obtained with \textsc{SBBTS} are not merely due to injecting additional randomness, we also compare 
    \textsc{SBBTS}-based augmentation with a naive noise-based augmentation strategy.
    Specifically, for each real sample $X$, we generate $p$ additional samples of the form
    \[
    \tilde{X}^{(p)} = X + \lambda \, \varepsilon^{(p)}, \qquad \varepsilon^{(p)} \sim \mathcal{N}(0, \sigma_X^2),
    \]
    with $\lambda = 0.5$.

\begin{table}[h]
    \centering
    \caption{test set performance of TabICL under different training configurations.
    Results are averaged over 5 seeds; standard deviations are reported in parentheses.
    $\downarrow$ indicates lower is better, $\uparrow$ indicates higher is better.
    Best result per row is \textbf{bold}.}
    \label{tab:test_comparison}
    \setlength{\tabcolsep}{6pt}
    \begin{tabular}{lcccc}
        \toprule
        \textbf{Metric} & \textbf{Zero-Shot} & \textbf{Real + Noise} & \textbf{Real} & \textbf{SBBTS} \\
        \midrule
        \multicolumn{5}{l}{\textit{Classification metrics}} \\[2pt]
        Accuracy ($\uparrow$)
            & $0.494$ & $0.518_{\,(0.008)}$ & $0.521_{\,(0.006)}$ & $\mathbf{0.532}_{\,(\mathbf{0.005})}$ \\[3pt]
        Log Loss ($\downarrow$)
            & $0.756$ & $0.695_{\,(0.004)}$ & $0.693_{\,(0.003)}$ & $\mathbf{0.691}_{\,(\mathbf{0.002})}$ \\[3pt]
        ROC AUC ($\uparrow$)
            & $0.486$ & $0.494_{\,(0.012)}$ & $0.497_{\,(0.008)}$ & $\mathbf{0.521}_{\,(\mathbf{0.007})}$ \\
        \midrule
        \multicolumn{5}{l}{\textit{Financial metrics}} \\[2pt]
        Avg Daily Return (\%) ($\uparrow$)
            & $-0.020$ & $0.086_{\,(0.035)}$ & $0.112_{\,(0.020)}$ & $\mathbf{0.143}_{\,(\mathbf{0.015})}$ \\[3pt]
        Std Daily Return (\%) ($\downarrow$)
            & $\mathbf{0.103}$ & $0.105_{\,(0.003)}$ & $0.110_{\,(\mathbf{0.002})}$ & $0.108_{\,(\mathbf{0.002})}$ \\[3pt]
        Sharpe Ratio ($\uparrow$)
            & $-0.254$ & $1.300_{\,(0.44)}$ & $1.613_{\,(0.33)}$ & $\mathbf{2.113}_{\,(\mathbf{0.20})}$ \\
        \bottomrule
    \end{tabular}
\end{table}

    As shown in Table~\ref{tab:test_comparison}, augmenting the training set with \textsc{SBBTS}-generated synthetic data
    consistently improves both classification and financial metrics compared to the zero-shot baseline and the real-data-only setting.
    In particular, we observe systematic gains in ROC AUC and Sharpe ratio, indicating that the model captures more informative
    ranking signals and translates them into improved risk-adjusted returns. Furthermore, white noise augmentation fails to yield consistent gains across metrics—and in some cases degrades performance—whereas \textsc{SBBTS}-based augmentation leads to clear and stable improvements across seeds. 
This indicates that SBBTS captures meaningful temporal and cross-sectional structure, rather than merely injecting additional noise. \\

    \begin{figure}[t]
        \centering
        \includegraphics[width=\columnwidth]{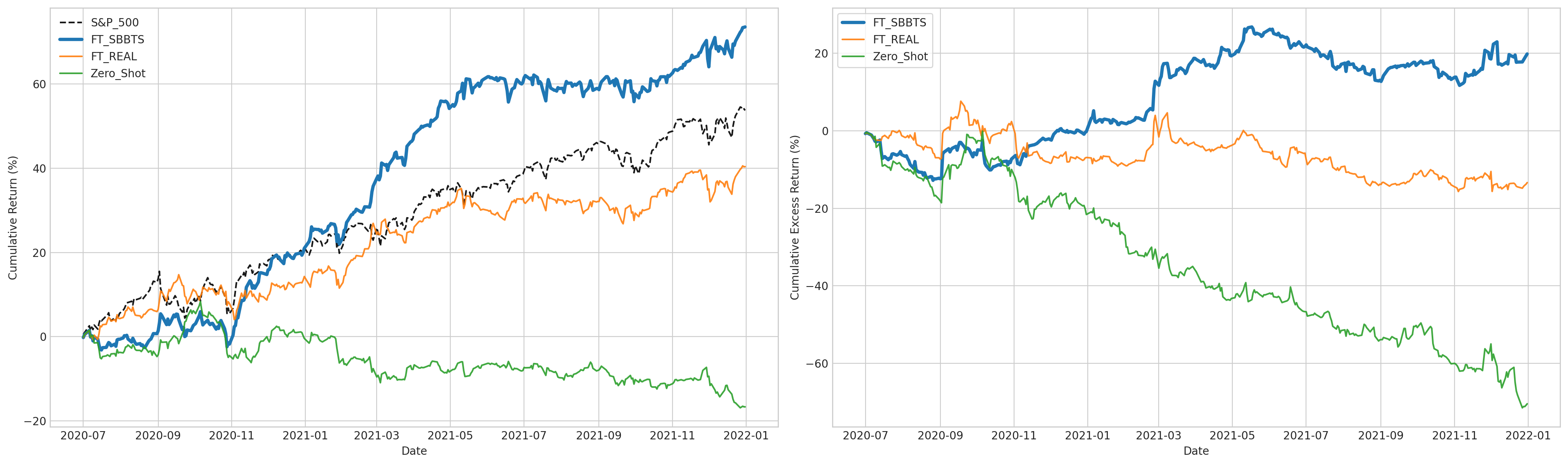}
        \caption{Cumulative return (left) and cumulative excess return (right) on the test period, with S\&P 500 index baseline.}
        \label{fig:cum_returns}
    \end{figure}

    Figure~\ref{fig:cum_returns} provides a time-series view of the trading performance.
    The model trained with \textsc{SBBTS} synthetic data delivers the highest cumulative return and maintains consistently positive excess returns throughout most of the test period.
    By contrast, the zero-shot model shows a persistent deterioration, while the real-data-only model achieves moderate but less stable gains.
    These results confirm that \textsc{SBBTS} augmentation not only improves pointwise predictive metrics but also translates into economically meaningful and more robust out-of-sample performance.

    Note that the objective is not to design the best possible trading strategy, but rather to assess the impact of synthetic data augmentation on model training. In this context, the fact that a simple toy strategy (without transaction cost) already outperforms the baseline when trained with synthetic data is encouraging.

    \paragraph{Effect of the amount of synthetic data.}
    To further analyze the role of synthetic data, Figure~\ref{fig:test_amount} reports the Log Loss and Sharpe ratio
    as a function of the number of synthetic paths used during training.
    This experiment allows us to assess how performance scales with the amount of generated data.

    \begin{figure}[htbp]
        \centering
        \includegraphics[width=\columnwidth]{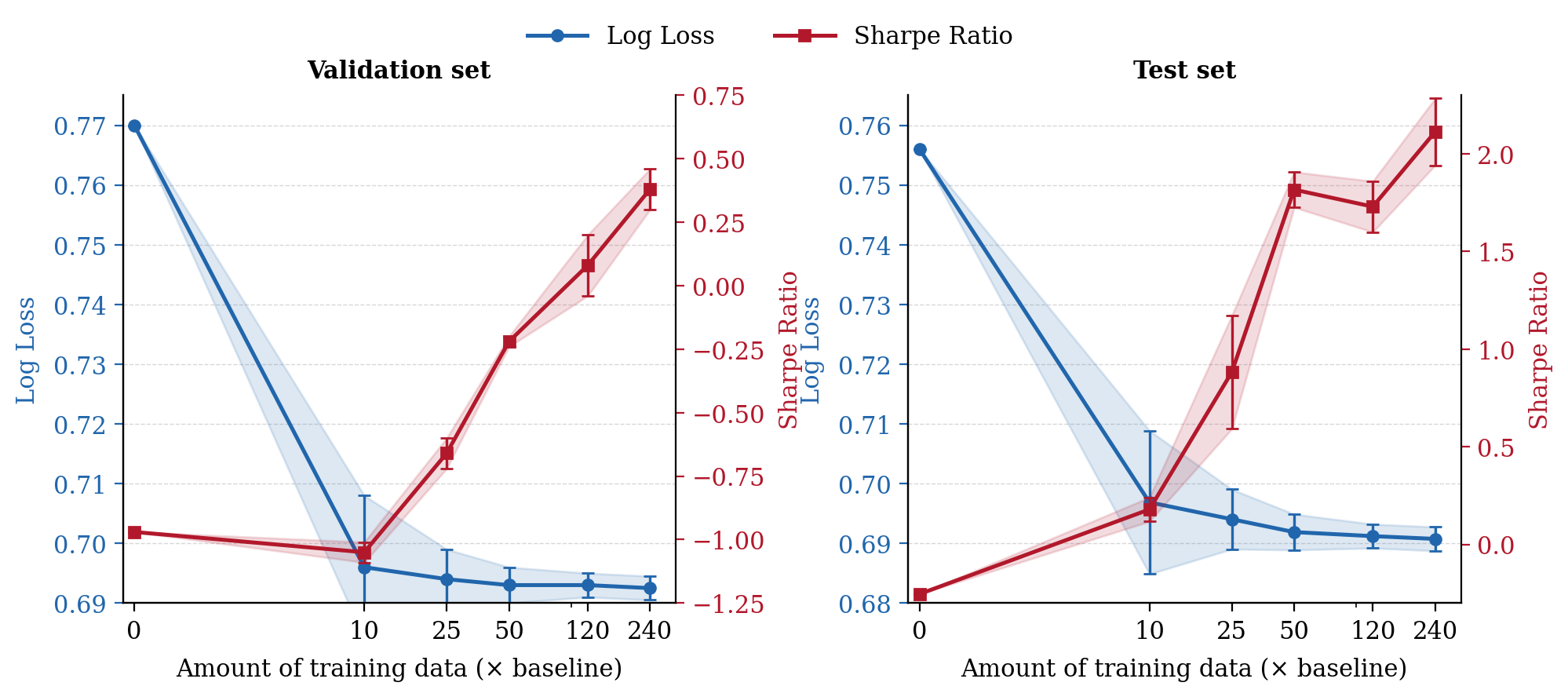}
        \caption{Validation (\textit{left}) and test set (\textit{right}) performance as a function of the amount of synthetic data generated by \textsc{SBBTS}.
    	Results are averaged over 5 seeds; error bars indicate one standard deviation.}
        \label{fig:test_amount}
    \end{figure}

    Figure~\ref{fig:test_amount} shows that performance improves as the amount of synthetic data increases on both validation and test set, up to a moderate regime where gains begin to saturate. This suggests that \textsc{SBBTS} effectively enriches the training distribution by exposing the predictive model to a broader set of plausible market scenarios, while additional synthetic data beyond this regime does not introduce instability or overfitting.

    Overall, these results provide strong empirical evidence that \textsc{SBBTS} generates synthetic time series that preserve
    and amplify predictive signal, making them well suited for data augmentation in financial forecasting tasks.
    Additional results on the validation set, together with a discussion of the statistical significance of the Sharpe ratio,
    are provided in Appendix~\ref{subsec:sup_res_trad}.

\section{Conclusion}
This paper introduced the Schrödinger Bass Bridge for Time Series (SBBTS), a novel generative framework that unifies Schrödinger Bridge and Bass martingale principles to jointly calibrate both drift and volatility in time series generation. By decomposing the problem into a sequence of semimartingale optimal transport steps, SBBTS provides an efficient and scalable algorithm that overcomes the volatility calibration limitations of traditional Schrödinger Bridge models.

Empirical results demonstrate the practical value of SBBTS across multiple domains. In synthetic experiments with the Heston model, SBBTS successfully recovers stochastic volatility and correlation parameters that previous methods failed to capture. In financial forecasting applications, SBBTS-generated synthetic data consistently enhances model performance, improving both classification metrics and risk-adjusted returns when used for data augmentation.

\subsection*{Limitations and Future Work}
While SBBTS demonstrates encouraging results, certain limitations warrant discussion. Notably, the model's behavior is influenced by the regularization parameter \(\beta\), whose optimal selection currently lacks a systematic criterion—although practical guidelines recommend avoiding excessively small values. The large-\(\beta\) approximation adopted in this work aligns with typical financial time scales, but the more general iterative scheme from \cite{alouadi2026lightsbbmbridgingschrodingerbass} could be adapted to our framework. On the theoretical side, although Algorithm~\ref{alg:sbb_algo_beta_large} converges consistently within \(K=5\) iterations in our experiments, formal convergence guarantees remain an open question.

These considerations highlight several promising research directions. Future work could develop principled methods for \(\beta\) calibration, extend the framework to incorporate jump-diffusion dynamics or irregularly sampled observations, and establish rigorous convergence proofs for the proposed training algorithm. Such advances would further solidify the theoretical foundations of SBBTS and broaden its applicability to more complex temporal data structures.

\subsection*{Acknowledgements}

This work was conducted in collaboration with the CMAP Laboratory at École Polytechnique, the LPSM Laboratory and BNP Paribas CIB Global Markets. We thank Baptiste Barreau (BNPP) and Charles-Albert Lehalle (CMAP) for their helpful discussions and feedback on early drafts of this work.

%				\bibliographystyle{plain}

%				\bibliography{biblioSBB}
				
%				\newpage
				
				\appendix
                \renewcommand{\theequation}{\arabic{equation}}
				
				\section{Reference Volatility and Proof of Theorem \ref{propreduc}}
				
				\subsection{Reference Volatility}
				
				Instead of using the identity matrix $I_d$ as reference volatility in \eqref{defJ},  one could  choose  the covariance matrix  of the time series distribution $\mu$ over the interval $(t_i,t_{i+1}]$, namely: 
				\begin{align}
					{\rm Var}_{\mu,i} & := \; \E_\mu \Big[  \big( \Delta X_{t_i} - \E_\mu[\Delta X_{t_i}] \big) \big( \Delta X_{t_i} - \E_\mu[\Delta X_{t_i}] \big)\trans \Big], \quad \Delta X_{t_i} := X_{t_{i+1}} - X_{t_i},   
				\end{align}  
				for $i$ $=$ $0,\ldots,n-1$. This covariance matrix can  be estimated from samples of $\mu$.  Then, we define a sequence of constant matrices $\bar \sigma_i$ in $\R^{d\times d}$, $i$ $=$ $0,\ldots,n-1$,  by
				\begin{align}
					\bar \sigma_i\bar  \sigma_i\trans & = \; {\rm Var}_{\mu,i}, 
				\end{align}
				where $\bar \sigma_i \bar \sigma_i\trans > 0$ (i.e., positive definite) and form a piecewise-constant deterministic volatility $\bar\sigma(t)$. In other words, this reference deterministic volatility  is calibrated to the time series variance between each observation time interval, and 
				we  study  a criterion in the form 
				\begin{align} \label{SBBbis} 
					{\rm SBB}(\P) &= \; \E_\P \Big[ \frac{1}{2} \int_0^T  \|\alpha_t\|^2_{\bar\Sigma^{-1}}  + \beta \|\sigma_t - \bar\sigma(t)\|^2 \d t \Big],   
				\end{align}
				where  we set $\bar\Sigma$ $=$ $\bar\sigma\bar\sigma\trans$.  
				
				\subsection{Proof of Theorem \ref{propreduc} }
				\label{sec:proof_theorem}

\begin{proof}
	{\it Step 1.} For $\nu\in \Pc((\R^d)^{i+2})$, define
	\begin{align}
		\widetilde V_i(\nu)
		:=
		\inf_{\P\in\Pc(\nu)}
		\E_\P\Big[\int_{t_i}^{t_{i+1}}L(\alpha_t,\sigma_t)\,\d t\Big],
		\qquad i=0,\ldots,n-1,
	\end{align}
	where
	\begin{align}
		L(a,\gamma):=\frac12\Big(|a|^2+\beta|\gamma-I_d|^2\Big).
	\end{align}
	For $x_{0:i}\in(\R^d)^{i+1}$, let
	\begin{align}
		\nu^{x_{0:i}}(\d y_{0:i+1})
		:=
		\delta_{x_{0:i}}(\d y_{0:i})\,\mu_{i+1|0:i}(\d y_{i+1}\mid x_{0:i}),
	\end{align}
	so that, by definition of $V_i$ in \eqref{defVi},
	\begin{align}
		V_i(x_{0:i})=\widetilde V_i(\nu^{x_{0:i}}).
	\end{align}

	Let now $\P\in\Pc(\mu_{i+1})$, and let $(\P_{x_{0:i}})_{x_{0:i}}$ be a regular conditional probability distribution of $\P$ given $X_{t_0:t_i}=x_{0:i}$. Then for $\mu_i$-a.e.\ $x_{0:i}$,
	\begin{align}
		\P_{x_{0:i}}\circ X_{t_0:t_{i+1}}^{-1}=\nu^{x_{0:i}},
	\end{align}
	hence $\P_{x_{0:i}}\in\Pc(\nu^{x_{0:i}})$ for $\mu_i$-a.e.\ $x_{0:i}$. Therefore
	\begin{align}
		\E_\P\Big[\int_{t_i}^{t_{i+1}}L(\alpha_t,\sigma_t)\,\d t\Big]
		&=
		\E_\P\Big[\E_\P\Big[\int_{t_i}^{t_{i+1}}L(\alpha_t,\sigma_t)\,\d t\ \Big|\ X_{t_0:t_i}\Big]\Big] \\
		&=
		\int \E_{\P_{x_{0:i}}}\Big[\int_{t_i}^{t_{i+1}}L(\alpha_t,\sigma_t)\,\d t\Big]\mu_i(\d x_{0:i}) \\
		&\ge
		\int V_i(x_{0:i})\,\mu_i(\d x_{0:i})
		=
		\E_{\mu_i}\big[V_i(X_{t_0:t_i})\big].
	\end{align}
	This implies
	\begin{align}
		\widetilde V_i(\mu_{i+1})
		\ge
		\E_{\mu_i}\big[V_i(X_{t_0:t_i})\big].
		\label{eq:firstineq_compromise}
	\end{align}

	For the converse inequality, fix $\eps>0$. By a standard measurable-selection argument, one may choose a universally measurable family
	$x_{0:i}\mapsto \P^\eps_{x_{0:i}}$ such that, for every $x_{0:i}$,
	\begin{align}
		\P^\eps_{x_{0:i}}\in \Pc(\nu^{x_{0:i}})
		\qquad\text{and}\qquad
		\E_{\P^\eps_{x_{0:i}}}\Big[\int_{t_i}^{t_{i+1}}L(\alpha_t,\sigma_t)\,\d t\Big]
		\le
		V_i(x_{0:i})+\eps.
		\label{eq:epsselector_compromise}
	\end{align}
	Define $\P^\eps\in\Pc$ by
	\begin{align}
		\P^\eps(A):=\int \P^\eps_{x_{0:i}}(A)\,\mu_i(\d x_{0:i}),
		\qquad A\in\Fc.
	\end{align}
	By construction,
	\begin{align}
		\P^\eps\circ X_{t_0:t_{i+1}}^{-1}=\mu_{i+1},
	\end{align}
	so $\P^\eps\in\Pc(\mu_{i+1})$, and
	\begin{align}
		\widetilde V_i(\mu_{i+1})
		&\le
		\E_{\P^\eps}\Big[\int_{t_i}^{t_{i+1}}L(\alpha_t,\sigma_t)\,\d t\Big] \\
		&=
		\int \E_{\P^\eps_{x_{0:i}}}\Big[\int_{t_i}^{t_{i+1}}L(\alpha_t,\sigma_t)\,\d t\Big]\mu_i(\d x_{0:i}) \\
		&\le
		\int V_i(x_{0:i})\,\mu_i(\d x_{0:i})+\eps.
	\end{align}
	Letting $\eps\downarrow0$ and combining with \eqref{eq:firstineq_compromise}, we obtain
	\begin{align}
		\widetilde V_i(\mu_{i+1})
		=
		\E_{\mu_i}\big[V_i(X_{t_0:t_i})\big].
		\label{eq:tildeVmui_compromise}
	\end{align}

	\vspace{1mm}

	\noindent {\it Step 2.} Next, define
	\begin{align}
		\bar V_i
		:=
		\inf_{\P\in\Pc(\mu_i)}
		\E_\P\Big[\int_0^{t_i}L(\alpha_t,\sigma_t)\,\d t\Big],
		\qquad i=0,\ldots,n,
	\end{align}
	and we claim that
	\begin{align}
		\bar V_{i+1}
		=
		\bar V_i+\E_{\mu_i}\big[V_i(X_{t_0:t_i})\big],
		\qquad i=0,\ldots,n-1.
		\label{eq:dynamic_prog_compromise}
	\end{align}

	Let $\P\in\Pc(\mu_{i+1})$. Then
	\begin{align}
		\E_\P\Big[\int_0^{t_{i+1}}L(\alpha_t,\sigma_t)\,\d t\Big]
		&=
		\E_\P\Big[\int_0^{t_i}L(\alpha_t,\sigma_t)\,\d t
		+\int_{t_i}^{t_{i+1}}L(\alpha_t,\sigma_t)\,\d t\Big] \\
		&\ge
		\E_\P\Big[\int_0^{t_i}L(\alpha_t,\sigma_t)\,\d t\Big]
		+\widetilde V_i(\mu_{i+1}) \\
		&\ge
		\bar V_i+\E_{\mu_i}\big[V_i(X_{t_0:t_i})\big],
	\end{align}
	since $\Pc(\mu_{i+1})\subset\Pc(\mu_i)$ and by \eqref{eq:tildeVmui_compromise}. This proves the inequality ``$\ge$'' in \eqref{eq:dynamic_prog_compromise}.

	For the reverse inequality, fix $\eps>0$ and choose $\P^{1,\eps}\in\Pc(\mu_i)$ such that
	\begin{align}
		\E_{\P^{1,\eps}}\Big[\int_0^{t_i}L(\alpha_t,\sigma_t)\,\d t\Big]
		\le
		\bar V_i+\eps.
		\label{eq:prefixeps_compromise}
	\end{align}
	Let $(\P^{1,\eps}_{x_{0:i}})_{x_{0:i}}$ be a regular conditional probability distribution of $\P^{1,\eps}$ given $X_{t_0:t_i}=x_{0:i}$. By a standard pasting argument, using the measurable family $(\P^\eps_{x_{0:i}})_{x_{0:i}}$ from Step~1, one can construct a probability measure $\Q^\eps\in\Pc(\mu_{i+1})$ by concatenating, for each $x_{0:i}$, the prefix law $\P^{1,\eps}_{x_{0:i}}$ on $[0,t_i]$ with the continuation law $\P^\eps_{x_{0:i}}$ on $[t_i,t_{i+1}]$.

	By construction, $\Q^\eps$ has the correct joint marginal $\mu_{i+1}$, and the cost splits as
	\begin{align}
		\E_{\Q^\eps}\Big[\int_0^{t_{i+1}}L(\alpha_t,\sigma_t)\,\d t\Big]
		&=
		\E_{\P^{1,\eps}}\Big[\int_0^{t_i}L(\alpha_t,\sigma_t)\,\d t\Big]
		+
		\int \E_{\P^\eps_{x_{0:i}}}\Big[\int_{t_i}^{t_{i+1}}L(\alpha_t,\sigma_t)\,\d t\Big]\mu_i(\d x_{0:i}) \\
		&\le
		\bar V_i+\eps+\int \big(V_i(x_{0:i})+\eps\big)\mu_i(\d x_{0:i}) \\
		&=
		\bar V_i+\E_{\mu_i}\big[V_i(X_{t_0:t_i})\big]+2\eps,
	\end{align}
	where we used \eqref{eq:prefixeps_compromise} and \eqref{eq:epsselector_compromise}. Since $\Q^\eps\in\Pc(\mu_{i+1})$, this yields
	\begin{align}
		\bar V_{i+1}
		\le
		\bar V_i+\E_{\mu_i}\big[V_i(X_{t_0:t_i})\big]+2\eps.
	\end{align}
	Letting $\eps\downarrow0$, we obtain the reverse inequality in \eqref{eq:dynamic_prog_compromise}.

	Hence \eqref{eq:dynamic_prog_compromise} holds for every $i=0,\ldots,n-1$. We conclude by forward induction on $i$, and by noting that ${\rm SBBTS}(\mu)=\bar V_n$.
\end{proof}

				\section{Neural Network Architecture}
				\label{sec:nn_archi}
				
				We describe the architecture of the model used throughout our experiments, illustrated in Figure~\ref{fig:netw}.  
				
				First, both the time step $t \in \mathbb{R}$ and the current value $Y_t \in \mathbb{R}^d$ are mapped onto a latent space of dimension $d_{\text{model}}$ using an independent Feed Forward Network (FNN). This FNN consists of a linear layer, layer normalization, the SiLU activation function \cite{article_silu}, and a final linear layer.  
				
				The past sequence $Y_{t_0:t_i} \in (\mathbb{R}^d)^{i+1}$ is first embedded into the latent space via a linear layer and then encoded as a vector $c_i \in \mathbb{R}^{d_{\text{model}}}$ using an encoder-only architecture from \cite{NIPS2017_3f5ee243} with one layer. A mask is applied during training to ensure the transformer does not see future time steps.  
				
				Finally, all embedded vectors are concatenated and mapped back to the original space of dimension $d$ using a similar FNN. The output is the estimated drift:  
				\[
				s_{\theta}(t, Y_t, Y_{t_0:t_i}) \simeq \varepsilon \, \nabla_y \log h^*_t(Y_t).
				\]

				\section{Additional Details on Numerical Experiments}
                \label{sec:details_num_exp}
				
				This section provides complementary details on the numerical experiments. All experiments were conducted on a single NVIDIA A100 SXM4 GPU with 40\,GB of memory.
				
				Unless stated otherwise, the parameters used to generate the synthetic time series during both training and inference are summarized in Table~\ref{tab:parameters_exp}.
				
				\begin{table}[!ht]
					\centering
					\begin{tabular}{ccccccccc}
						\toprule
						$K$ & $T$ & $\tilde{T}$ & $n_{\text{epoch}}$ & Batch Size & $lr$ & $d_{\text{model}}$ & $n_{\text{head}}$ & $N^{\pi}$ \\ 
						\midrule
						$5$ & $1$ & $0.99$ & $1000$ & $128$ & $10^{-3}$ & $128$ & $16$ & $50$ \\
						\bottomrule
					\end{tabular}
					\caption{Parameters used in the numerical experiments.}
					\label{tab:parameters_exp}
				\end{table}
				
				Here, $N^{\pi}$ denotes the number of time steps used to simulate the diffusion process~\eqref{DSBY} via the Euler--Maruyama scheme. We also used the Adam optimizer \cite{2015-kingma} to train the neural network.
				Moreover, we follow the same scaling procedure introduced in \cite{Alouadi_2025} (see Section~6).

				\subsection{Heston Process}
				Table~\ref{tab:params_combined_resized} reports the ranges of parameters used to generate the training dataset. We then fit our generative model on this dataset and estimate the parameters using the maximum likelihood estimation (MLE) approach described in \cite{Alouadi_2025}.
				
				\begin{table}[h]
					\centering
					\begin{tabular}{@{}ccccc@{}}
						\toprule
						$\kappa$ & $\theta$ & $\xi$ & $\rho$ & $r$ \\
						\midrule
						$[0.5,\,4]$ & $[0.5,\,1.5]$ & $[0.1,\,0.9]$ & $[-0.9,\,0.9]$ & $[0.01,\,0.1]$ \\
						\bottomrule
					\end{tabular}
					\caption{Parameter ranges used for simulating the Heston process.}
					\label{tab:params_combined_resized}
				\end{table}

				\subsection{Data Augmentation}
                \label{subec:data_augmentation_details}
				
				We provide additional details on the data augmentation experiments in this section.

    \subsubsection{Synthetic data quality assessment}
    
    We evaluate the quality of the generated synthetic time series by comparing several statistical properties of the real and synthetic datasets, focusing on both temporal and cross-sectional structures. 
    
    First, we assess in Figure~\ref{fig:clusters_correl} the temporal dependence structure within each cluster by comparing the autocorrelation functions of returns and squared returns. Overall, the synthetic time series successfully reproduce the main autocorrelation patterns observed in the real data. The autocorrelation curves of the synthetic series appear smoother than those of the real data. This effect is mainly due to the averaging behavior of the neural network approximation: by learning a smooth estimate of the underlying dynamics through a mean-squared training objective, the model filters out high-frequency sampling noise present in the empirical autocorrelation.
    
    \begin{figure}[htbp]
        \centering
        \begin{subfigure}[t]{\textwidth}
            \centering
            \includegraphics[width=\textwidth]{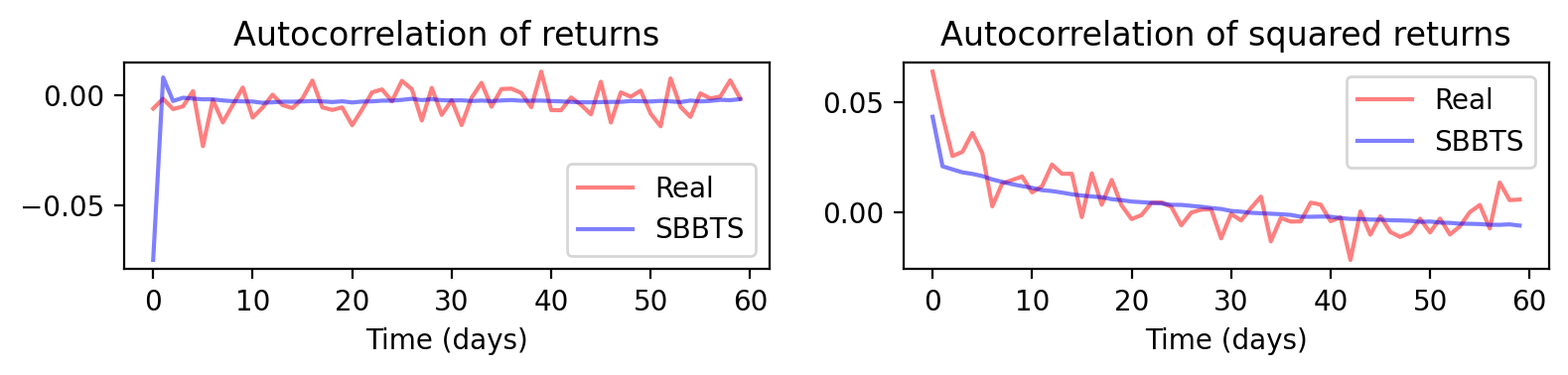}
            \caption{Cluster $1$}
            \label{fig:nm}
        \end{subfigure}
        \vspace{0.5em}
        \begin{subfigure}[t]{\textwidth}
            \centering
            \includegraphics[width=\textwidth]{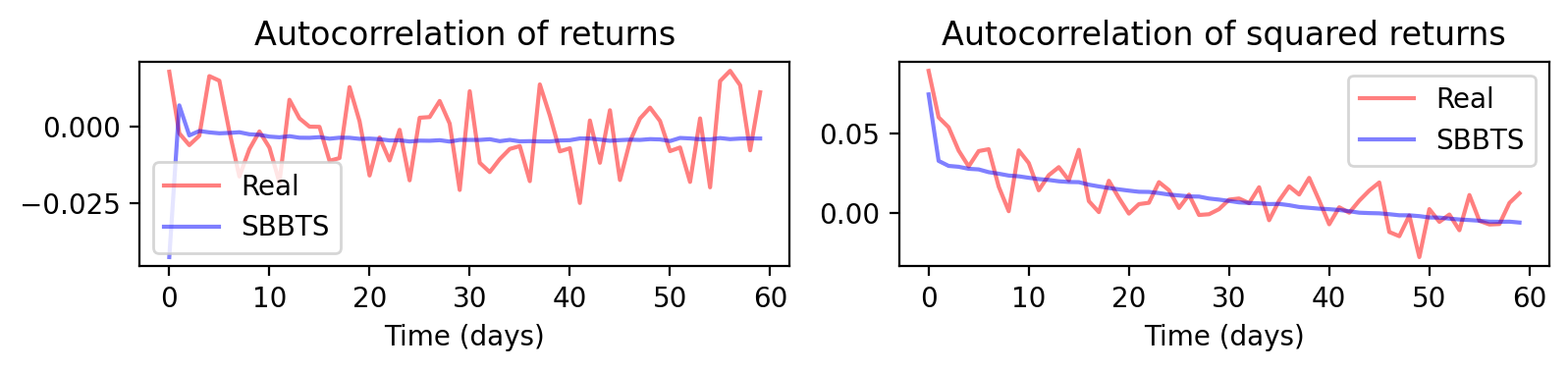}
            \caption{Cluster $2$}
            \label{fig:m8g}
        \end{subfigure}
        \vspace{0.5em}
        \begin{subfigure}[t]{\textwidth}
            \centering
            \includegraphics[width=\textwidth]{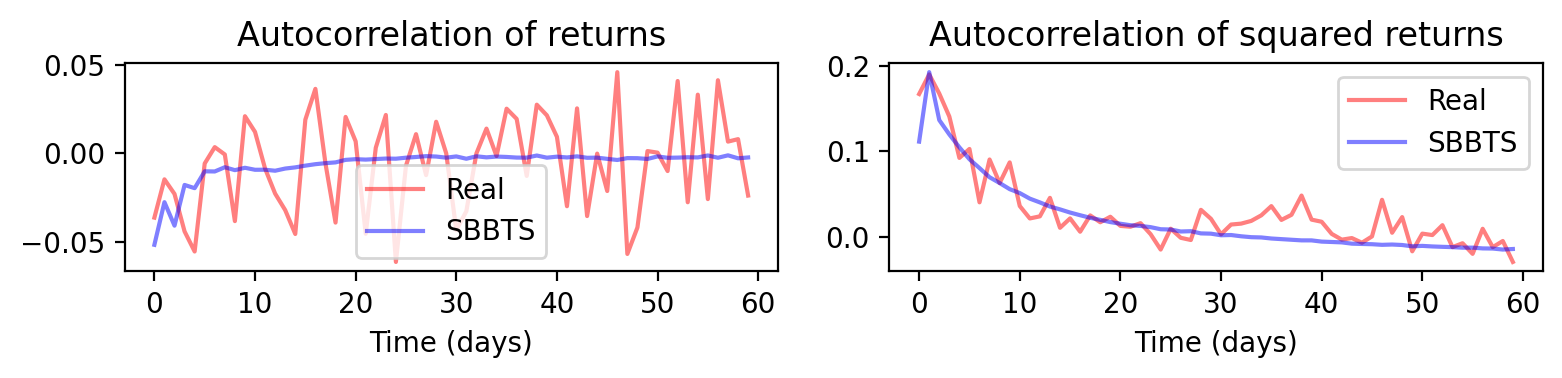}
            \caption{Cluster $3$}
            \label{fig:n8g}
        \end{subfigure}
        \caption{Autocorrelation functions of returns and squared returns for real and synthetic data across clusters.}
        \label{fig:clusters_correl}
    \end{figure}
    
    Next, we compare the marginal distributions of the factors within each cluster. Figure~\ref{fig:clusters_bins} illustrates that the synthetic samples closely match the empirical distributions of the real data. This agreement indicates that the model accurately captures the distributional characteristics of the latent factors across clusters.
    
    \begin{figure}[htbp]
        \centering
        \ \includegraphics[width=\columnwidth]{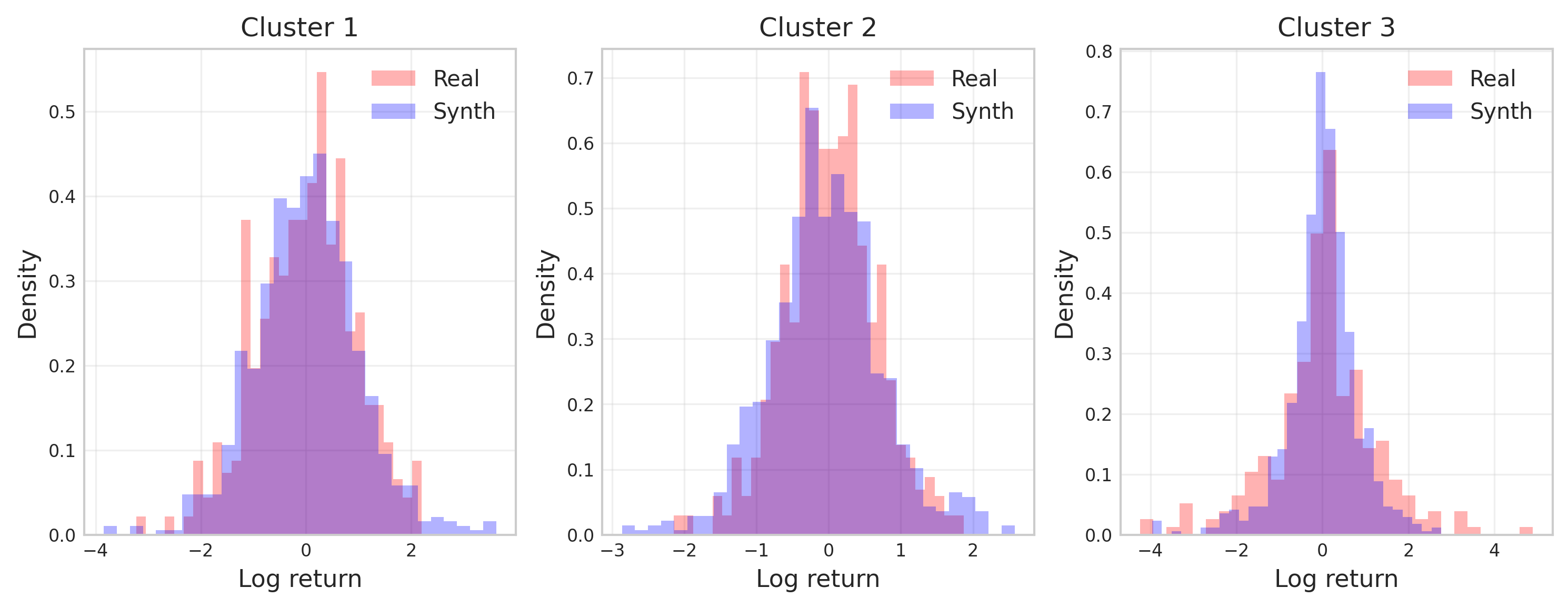}
        \caption{Distribution of cluster factors in real and synthetic data.}
        \label{fig:clusters_bins}
    \end{figure}
    
    Moreover, we examine the cross-sectional dependence structure by comparing the correlation matrices of returns computed from real and synthetic datasets. As shown in Figure~\ref{fig:corr_mat_sp}, the synthetic data preserve the main correlation patterns observed in the real market, confirming that the model is able to replicate not only temporal dynamics but also cross-asset relationships.
    
    \begin{figure}[t]
        \centering
        \includegraphics[width=\columnwidth]{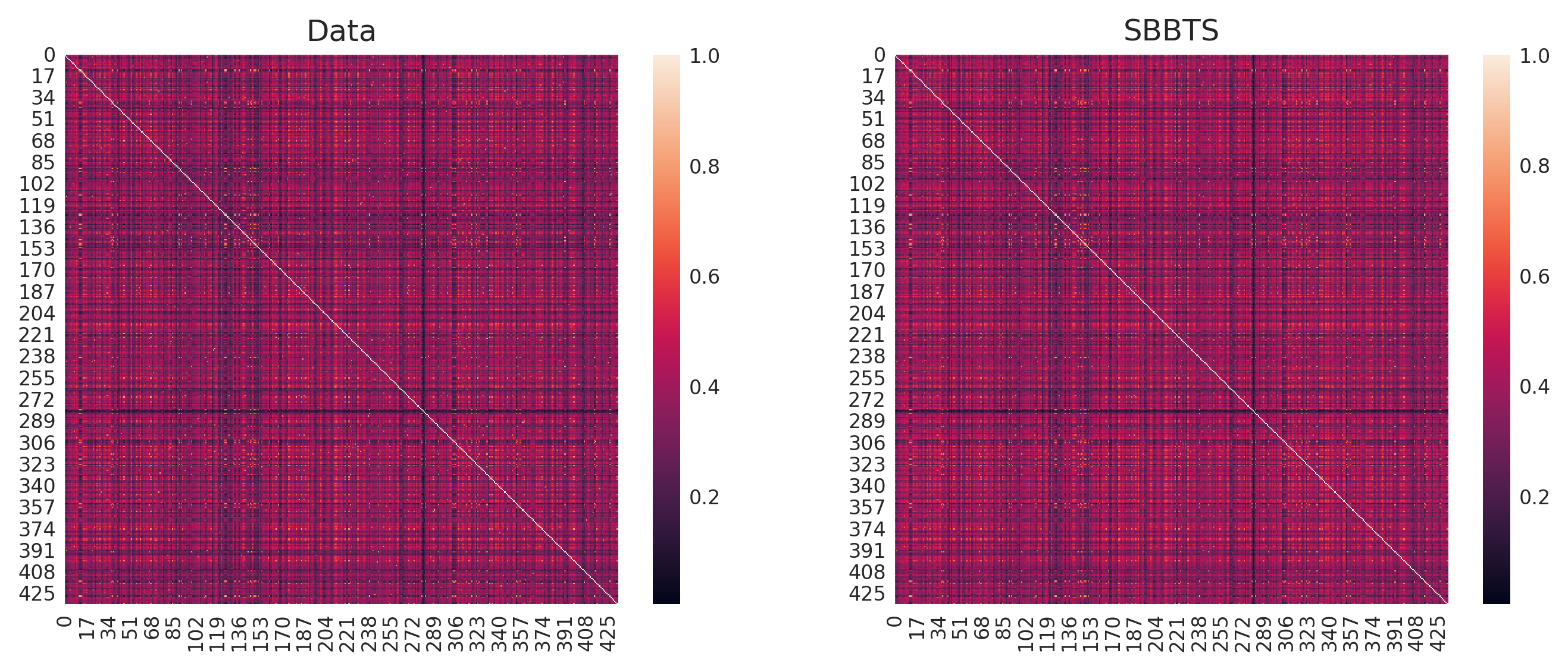}
        \caption{Correlation matrices of returns for real and synthetic data.}
        \label{fig:corr_mat_sp}
    \end{figure}

    Finally, Table~\ref{tab:stat_metric} reports a quantitative comparison of tail-risk statistics averaged across all instruments, using the SBTS framework \cite{hamdouche2023generative} as a benchmark. We evaluate the Value at Risk (VaR) and Expected Shortfall (ES) at the $95\%$ and $99\%$ confidence levels, together with the annualized return and annualized standard deviation. These metrics jointly characterize the tail behavior and the overall risk--return profile of the generated series relative to real data.
    
    \begin{table}[H]
    \centering
    \begin{tabular}{cccc}
    \toprule
     & Real & SBBTS & SBTS \\
    \midrule
    $\text{VaR}_{99\%}$ (\%) & $3.60$ & $3.57$ & $3.49$ \\
    $\text{VaR}_{95\%}$ (\%) & $2.11$ & $2.19$ & $2.17$ \\
    $\text{ES}_{99\%}$ (\%)  & $4.65$ & $4.44$ & $4.36$ \\
    $\text{ES}_{95\%}$ (\%)  & $3.15$ & $3.18$ & $3.07$ \\
    Ann.\ Ret (\%)           & $19.02$ & $16.31$ & $14.68$ \\
    Ann.\ Std (\%)           & $24.22$ & $24.38$ & $23.06$ \\
    \bottomrule
    \end{tabular}
    \caption{Comparison of tail-risk metrics (VaR, ES) and annualized performance statistics averaged across instruments.}
    \label{tab:stat_metric}
    \end{table}

				\subsubsection{TabICL Setup}
				TabICL, when applied to a dataset of size $\mathbb{R}^{n\times m}$, employs a column‑then‑row attention mechanism whose computational complexity scales as $\mathcal{O}(n^{2}m + n)$, thereby imposing substantial constraints on both execution time and GPU‑VRAM usage; consequently, we are compelled to make particular design choices in the continuation pre‑training phase, which are described below.
				\paragraph{Training Framework : }
				
				In the remainder of the paper we refer to each individual forecast problem as an \textbf{episode}. An episode is defined by a contextual window of $n_{context} =22 \text{ days}$  $[d_{r},d_{r+1}, ..., d_{r+n_{context}}]$, and the model is required to predict, in a single forward pass, the return of the next day $d_{r+n_{context}+1}$, for all 433 instruments simultaneously.
				
				We denote by a \textbf{path} a synthetic return matrix of dimension $\mathcal{R}\in\mathbb{R}^{252\times 433},$
				generated with the SBBTS model. Given a context length $n_{\text{context}}$, a single path yields $252 - n_{\text{context}}$ distinct forecasting \emph{episodes}.
				
				Our goal is to expose TabICL to the maximum possible diversity of synthetic episodes during the continuation pre‑training phase, while respecting a fixed
				computational and time budget.  Consequently, at every training epoch we sample $n_{\text{path}}$ paths (e.g.,$n_{\text{path}}=40$ or $n_{\text{path}}=200$) and
				process only a randomly\footnote{The sampling is performed by selecting contiguous blocks of days rather than individual days.  For each path we first draw a block length (e.g., 5 days, 22 days, etc.), then uniformly choose a starting index and take the whole block.  This yields episodes of varying temporal extent while preserving the natural correlation structure of the returns.} selected $50\%$ of the episodes contained in those paths.  
				Formally, for each epoch we draw a set 
				$
				\mathcal{P}_{\text{epoch}}=\{ \mathcal{R}^{(1)},\dots,\mathcal{R}^{(n_{\text{path}})}\},
				$
				and perform the forward–backward pass on a subset $\mathcal{E}_{\text{epoch}}$ of episodes defined as :
				$$
				\mathcal{E}_{\text{epoch}}\subseteq\bigcup_{k=1}^{n_{\text{path}}}
				\Bigl\{\,\text{episodes in }\mathcal{R}^{(k)}\,\Bigr\},
				\qquad |\mathcal{E}_{\text{epoch}}| = 0.5\,
				\bigl( n_{\text{path}}\,(252-n_{\text{context}}) \bigr)
				$$
				This sampling scheme yields a rich, ever‑changing training distribution while
				keeping the per‑epoch computational cost tractable. As suggested in \cite{qu2025tabicl}, to restore a form of permutation invariance, we shuffle feature order across each epoch.
				
				We use the log loss, weighting each observation by the normalized absolute value of its realized return, and
                the  AdamW \cite{loshchilov2019decoupledweightdecayregularization} optimizer with a $lr=3e-7$ learning rate as suggested in \cite{garg2025realtabpfnimprovingtabularfoundation} to avoid catastrophic forgetting and accumulate the gradient over 32 episodes.
				\paragraph{Evaluation Framework :}
				
				We employ early‑stopping on the real‑world SP500 validation split, which spans  377 trading days.  Given our context length $n_{\text{context}}$, this validation window thus yields $377 - n_{\text{context}} = 355$ forecasting \emph{episodes}. Consequently, after each epoch we compute the validation log-loss (and auxiliary metrics) over all $355$ episodes and stop training when the performance ceases to improve according to a patience of $6$ epochs.  The final model is then evaluated on the held‑out test set by processing every available episode within the real‑world SP500 test split with the identical $n_{\text{context}}$ window, ensuring a fair and consistent comparison across all experimental conditions.

				\subsubsection{Feature Engineering from Raw Returns}
				
				We convert a matrix of daily
				returns $\mathbf{R}\in(\mathbb{R}^{d})^{N}$  into a
				tabular dataset suitable for TabICL.  Each row corresponds to a single
				instrument on a single day and contains a set of handcrafted statistics that
				aim to produce an approximately i.i.d. representation of the underlying
				financial process.
				
				\begin{itemize}
					\item \texttt{feature.return\_t-1\_market} : the marketwide lag‑1 return $\tilde R_{t-1}= \frac{1}{d}\sum_{i=1}^{d} R_{t-1,i}.$
					
					\item \texttt{feature.cum\_ret\_h1} : $h_1$‑day cumulative return for the instrument $i$,
					$\text{cum\_ret}_{t,i}^{(h_1)}=\sum_{k=0}^{h_1 -1} R_{t-k,i}.$
					
					\item \texttt{feature.vol\_h1} : volatility of the last $h_1$ returns,
					$$\text{vol}_{t,i}^{(h_1)}=
					\sqrt{\frac{1}{h_1-1}\sum_{k=0}^{h_1-1}
						\bigl(R_{t-k,i}-\bar R_{t,i}^{(h_1)}\bigr)^{2}},\qquad
					\bar R_{t,i}^{(h_1)}=\frac{1}{h_1}\sum_{k=0}^{h_1-1}R_{t-k,i}.$$
					
					\item \texttt{feature.ret\_t-1\_zscore\_h} : $z$‑score of the lag‑1 return of instrument i,
					$$z_{t,i}^{(h)}=
					\frac{R_{t-1,i}-\mu_{t,i}^{(h)}}{\sigma_{t,i}^{(h)}},\qquad
					\mu_{t,i}^{(h)}=\frac{1}{h}\sum_{k=0}^{h-1}R_{t-k,i},\quad
					\sigma_{t,i}^{(h)}=
					\sqrt{\frac{1}{h-1}\sum_{k=0}^{h-1}
						\bigl(R_{t-k,d}-\mu_{t,d}^{(h)}\bigr)^{2}}.$$
					
					\item \texttt{feature.mkt\_cumret\_h} : cumulative market return over the
					past $h$ days,
					$$\text{mkt\_cumret}_{t}^{(h)}=\sum_{k=0}^{h-1}\tilde R_{t-k}.$$
					
					\item \texttt{feature.mkt\_vol\_h} : market volatility computed analogously
					to \texttt{feature.vol\_h} but on the market series $\tilde R_{t}$.
					
					\item \texttt{feature.mkt\_mean\_h } : simple moving average of the market
					return,
					$$\text{mkt\_mean}_{t}^{(h)}=\frac{1}{h}\sum_{k=0}^{h-1}\tilde R_{t-k}.$$
				\end{itemize}
				The horizons $h_1\in\{5,10,21,63,126,252\}$ correspond to weekly,
				bi‑weekly, monthly, quarterly, semi‑annual and annual windows and $h\in\{3,5,10,21\}$
				
				These engineered columns give TabICL a rich, approximately i.i.d. tabular view
				of each forecasting episode while retaining the financial intuition behind each
				statistic. 
				\noindent
				Of course, these engineered columns are relatively \emph{toy} – we could readily
				enrich the representation with additional information such as trading volume,
				standard technical indicators (e.g., moving‑average convergence/divergence,
				relative strength index...), and other signals that are commonly
				used in production‑grade trading systems.  However, the purpose of the present
				study is not to devise a profitable trading strategy; rather, we aim to
				quantify how pre‑training on synthetic data influences the downstream
				performance of a tabular foundation model on financial data.  Consequently, we deliberately keep the
				feature set minimal and focus on the effect of the synthetic‑data augmentation
				itself.
				
				\subsubsection{Dimensionality Reduction}
				
				The training dataset consists of a single multivariate time series of length $N = 2263$
				and dimension $d = 433$. Rather than working directly with the high-dimensional return
				matrix $X \in \R^{N \times d}$, we project the data onto a lower-dimensional factor space
				$F \in \R^{N \times m}$ using principal component analysis (PCA), with $m \ll d$. In our
				experiments, we found $m = 16$.
				
				The extracted independent factors are subsequently grouped into $3$ clusters using $k$-means
				clustering, under the assumption that factors within the same cluster share the same distribution. The SBBTS model is then fitted independently to each cluster
				of factors. The remaining idiosyncratic components are treated separately. Since these residuals
				exhibit heavy-tailed behavior, they are modeled independently across dimensions using
				a Gaussian mixture with two components. 
				
				Synthetic samples of asset returns are recovered from the generated factor time series
				via the decomposition
				\[
				\hat{X}
				=
				\underbrace{\hat{F} P_{1:m}^\top}_{\text{factor component}}
				+
				\underbrace{\hat{R}}_{\text{residual component}},
				\]
				where $\hat{F} \in \R^{N \times m}$ denotes the synthetic factor matrix,
				$P_{1:m} \in \R^{d \times m}$ is the PCA projection matrix, and
				$\hat{R} \in \R^{N \times d}$ represents the synthetic residual time series. For further details on the dimensionality reduction procedure, we refer the reader to
				\cite{cetingoz2025syntheticdataportfoliosthrow}.

				In practice, we employ a sliding window approach with a stride of $1$ to decompose each cluster of factors in samples of length $253$. This yields a training set, denoted as $\mathcal{D}$, on which we fit the SBBTS model. Additionally, we generate synthetic samples of the S\&P 500, also of length $253$, and split each synthetic sample $\hat{X} = (\hat{X}_{t_1}, \ldots, \hat{X}_{t_{253}})$ into input and target components, defined as:
				\begin{equation}
					\text{Input} = (\hat{X}_{t_1}, \ldots, \hat{X}_{t_{252}}) \quad \text{and} \quad \text{Target} = \mathrm{sign}(\hat{X}_{t_{253}}).
				\end{equation}
				
    \subsubsection{Discussion on Sharpe ratios}
    \label{subsec:sup_res_trad}
    
    We investigate the statistical significance of the Sharpe ratios obtained in our experiments.
    More specifically, we compute \textbf{95\% bootstrap confidence intervals} for the estimated Sharpe ratios using the methodology proposed in \cite{Riondato2018SharpeRE}.
    We focus on the validation and test sets, each comprising $420$ i.i.d.\ observations, and compare the real-only training regime with the setting augmented using \textsc{SBBTS} synthetic data.
    The resulting confidence intervals are reported in Table~\ref{tab:ic_comparaison}.
    
    \begin{table}[H]
    \centering
    \begin{tabular}{ccc}
    \toprule
     & Real & \textsc{SBBTS} \\
    \midrule
    Validation & $[-1.73,\ 1.28]$ & $[-0.62,\ 2.30]$ \\
    Test       & $[0.28,\ 3.32]$  & $[0.58,\ 3.34]$ \\
    \bottomrule
    \end{tabular}
    \caption{95\% bootstrap confidence intervals for the Sharpe ratios obtained under real-only training and \textsc{SBBTS}-based data augmentation.}
    \label{tab:ic_comparaison}
    \end{table}
    Although the confidence intervals obtained under \textsc{SBBTS} augmentation consistently exhibit higher upper and lower bounds (suggesting that in the worst-case scenario, the model trained on SBBTS data performs less poorly), the overlap between intervals prevents us from concluding that the improvement in Sharpe ratio is \textit{statistically significant} at conventional confidence levels.
    This limitation is primarily due to the relatively small number of observations available in both the validation and test sets.
    In practice, establishing statistical significance for Sharpe ratios typically requires a much larger sample size. 
				
\small 

\bibliographystyle{plain}

\bibliography{biblioSBB}

\end{document}